\newtheorem{thm}{Theorem}
\newtheorem{defn}{Definition}
\begin{document}

\def \bw {\tilde{\mathbf{w}}}
\def \by {\tilde{\mathbf{y}}}
\def \bx {\tilde{\mathbf{x}}}
\def \bz {\tilde{\mathbf{Z}}}
\def \bX {\tilde{\mathbf{X}}}
\def \xx {\mathbf{x}}
\def \I {\mathbf{I}}
\def \K {\mathbf{k}}
\def \yy {\mathbf{y}}
\def \zz {\mathbf{z}}
\def \ee {\mathbf{e}}
\def \gg {\mathbf{g}}
\def \ww {\mathbf{w}}
\newcommand{\N}{{\mathbb N}}
\newcommand{\E}{{\mathbb E}}
\def \amax {\operatorname{argmax}}
\def \amin {\operatorname{argmin}}
\def \muu  {\mbox{\boldmath $\mu$}}
\def \xii  {\mbox{\boldmath $\xi$}}
\def \alphaa  {\mbox{\boldmath $\alpha$}}

%
% paper title
% can use linebreaks \\ within to get better formatting as desired
\title{$K$-Plane Regression}
%
%
% author names and IEEE memberships
% note positions of commas and nonbreaking spaces ( ~ ) LaTeX will not break
% a structure at a ~ so this keeps an author's name from being broken across
% two lines.
% use \thanks{} to gain access to the first footnote area
% a separate \thanks must be used for each paragraph as LaTeX2e's \thanks
% was not built to handle multiple paragraphs
%

\author{Naresh~Manwani,
        P.~S.~Sastry,~\IEEEmembership{Senior~Member,~IEEE}
\thanks{Naresh Manwani and P. S. Sastry are with the Department
of Electrical Engineering, Indian Institute of Science, Bangalore 560012,
India e-mail: (naresh,sastry@ee.iisc.ernet.in).}}

\maketitle

\begin{abstract}
In this paper, we present a novel algorithm for piecewise linear regression which can learn continuous as well as
discontinuous piecewise linear functions. The main idea is to repeatedly partition the data and learn a liner model in each partition. While a simple algorithm incorporating this idea does not work well, an interesting modification
results in a good algorithm.
The proposed algorithm is similar in spirit to $k$-means clustering algorithm.
We show that our algorithm can also be viewed as an EM algorithm for maximum likelihood estimation of parameters under a
reasonable probability model. We empirically demonstrate the effectiveness of our approach by
comparing its performance with the state of art regression learning algorithms on some real
world datasets.
\end{abstract}
% IEEEtran.cls defaults to using nonbold math in the Abstract.
% This preserves the distinction between vectors and scalars. However,
% if the journal you are submitting to favors bold math in the abstract,
% then you can use LaTeX's standard command \boldmath at the very start
% of the abstract to achieve this. Many IEEE journals frown on math
% in the abstract anyway.

% Note that keywords are not normally used for peerreview papers.
\begin{IEEEkeywords}
Piecewise Linear, Regression, Mixture Models, Expectation Maximization, Learning.
\end{IEEEkeywords}

% For peer review papers, you can put extra information on the cover
% page as needed:
% \ifCLASSOPTIONpeerreview
% \begin{center} \bfseries EDICS Category: 3-BBND \end{center}
% \fi
%
% For peerreview papers, this IEEEtran command inserts a page break and
% creates the second title. It will be ignored for other modes.
\IEEEpeerreviewmaketitle

\section{Introduction}
In a regression problem, given the training dataset containing pairs of multi-dimensional feature vectors and
corresponding real-valued target outputs, the task is to learn a function that captures the relationship between
feature vectors and their corresponding target outputs.

Least square regression  and support vector regression  are well known and
generic approaches for regression learning problems  \cite{Bishop,Hastie,smola1998,Zhu2008,Jin2005}.
In the least squares approach, nonlinear regression functions can be learnt by using
user-specified fixed nonlinear mapping of feature vectors from original space to some suitable high dimensional space though
this could be computationally expensive.
In support vector regression (SVR),  kernel functions are used for nonlinear problems.
Using a nonlinear kernel function, SVR implicitly transforms the examples to some high dimensional space
and finds a linear regression function in the high dimensional space. SVR has a large margin flavor
 and has well studied performance guarantees.
In general, SVR solution is not easily interpretable in the original feature space for nonlinear problems.

A different approach to learning a nonlinear regression function is to approximate the target function
by a piecewise linear function. Piecewise linear approach for regression problems provides better understanding
of the behavior of the regression surface in the original feature space as compared to the kernel-based approach of SVR.
In piecewise linear approaches,
the feature space is partitioned into disjoint regions and for every partition a linear regression function is learnt.
The goal here is to simultaneously estimate the optimal partitions and linear model for each partition.
This problem is hard and is computationally intractable \cite{Paoletti2007}.

The simplest piecewise linear function is either a convex or a concave
piecewise linear function which is represented as a maximum
or minimum of affine functions.
A generic piecewise linear regression function can be represented as a
sum of these convex/concave piecewise linear functions \cite{Breiman,Shuning,Magnani2009}.

In this paper we present a novel method of learning piecewise linear regression functions. In contrast to all the existing methods,
our approach is capable of learning discontinuous functions also. We show, through empirical studies, that this algorithm
is attractive in comparison to the SVR approach as well as the hinge hyperplanes method which is among the best algorithms
 for learning piecewise linear functions.

Existing approaches for piecewise linear regression learning can be broadly classified into two classes.
In the first set of approaches one assumes a specific form for the function and
estimates the parameters. Form of a regression function can be fixed by fixing the number of hyperplanes and
fixing the way these hyperplanes are combined to approximate the regression surface.
In the second set of approaches, the form of the regression function is not fixed apriori.

In fixed structure approaches we search over a parametrized family of piecewise linear regression functions and the
parameters are learnt by solving an optimization problem
to, typically,  minimize the sum of the squared errors. Some examples of such methods are
 mixture of experts and hierarchical
mixture of experts \cite{Nowlan1,Waterhouse,Jordan94} models.
%Here, each expert is a linear regression function which works best for some region in the input space.

In the  set of approaches where no fixed structure is assumed, {\em regression tree} \cite{Breiman1984,Jayadeva} is the most
widely used method. A regression tree is built by binary or multivariate recursive partitioning in a greedy fashion.
Regression trees split the feature space at every node in such a way that fitting a linear regression function to
each child node will minimize the sum of squared errors.
This splitting or partitioning is then applied to each of the child nodes.
The process continues until the number of data points at a node reaches a user-specified
minimum size or the error becomes smaller than some tolerance limit.
 In contrast to decision trees
where leaf nodes are assigned class labels, leaf nodes in regression trees are associated with
linear regression models. Most of the algorithms for learning regression trees are greedy in nature.
At any node of the tree, once a hyperplane is learnt to split the feature space, it can not be altered by
 any of its child nodes. The greedy nature of the method can result in convergence to a suboptimal solution.

A more refined regression tree approach is {\em hinging hyperplane} method \cite{Breiman,Pucar} which overcomes
several drawbacks of regression tree approach. A hinge function is defined as maximum
or minimum of two affine functions \cite{Breiman}. In the hinging hyperplane approach, the regression function
is approximated as a sum of these hinge functions where the number of hinge functions are not fixed {\em apriori}.
The algorithm starts with fitting a hinge function on the training data using the hinge finding algorithm \cite{Breiman}.
%Hinge finding algorithm is an optimal algorithm for learning hinge functions \citep{Breiman}.
Then, residual error is calculated for every example and based on this a new hinge function may be added to the model (unless
we reach the maximum allowed number of hinges). Every time a new hinge function is added, its parameters are found by
fitting the residual error. This algorithm overcomes the greedy nature of regression tree approach
by providing a mechanism for re-estimation of parameters of each of the earlier hinge function whenever a new hinge is added.
 Overall, hinge hyperplanes algorithm tries to learn an optimal regression tree, given the training data.

A different greedy approach for piecewise linear regression learning
is {\em bounded error approach} \cite{Amaldi,Alberto2003,Alberto2005}.
In bounded error approaches, for a given bound ($\epsilon >0$) on the tolerable error,
the goal is to learn a piecewise linear regression function such that for every point in the training set,
the absolute difference between the target value and the predicted value
is less than $\epsilon$. This property is called bounded error property. Greedy heuristic algorithms \cite{Alberto2003,Alberto2005}
have been proposed to find such a piecewise linear function. These algorithms start with finding a linear regression function which
should satisfy the bounded error property for as many points in the training set as possible. This problem is known as
{\em maximum feasible sub-system problem} (MAX-FS) and is shown to be NP-hard \cite{Amaldi}. MAX-FS problem is repeated on the
remaining points until all points are exhausted. So far, there are no theoretical results to support the quality of the solution
of these heuristic approaches.

Most of the existing approaches for learning regression functions find a continuous approximation for the
regression surface even if the actual surface is discontinuous. In this paper, we present a  piecewise
linear regression algorithm which is able to learn both continuous as well as discontinuous functions.

We start with a simple algorithm that is similar, in spirit, to the $k$-means clustering algorithm.
The idea is to repeatedly keep partitioning the training data and learning a hyperplane for each partition. In each such iteration,
after learning the hyperplanes, we repartition the feature vectors so that all feature vectors in a partition have least prediction
error with the hyperplane of that partition.
We call it $K$-plane regression algorithm. Though we are not aware of any literature where such a method is
explicitly proposed and investigated for learning regression functions, similar ideas have been proposed in related contexts.
For example, a similar problem is addressed in the system identification literature \cite{Amaldi}.
A probabilistic version of such an idea was discussed under the title
{\em mixtures of multiple linear regression} \cite[Chapter 14]{Bishop}.

This $K$-plane regression algorithm is
attractive because it is conceptually very simple. However, it suffers from some serious drawbacks in terms of
convergence to non-optimal solutions, sensitivity to additive noise and lack of model function.
We discuss these issues and based on this insight propose new and modified $K$-plane regression
algorithm. In the modified algorithm also we keep repeatedly partition the data and learning a linear model for
each partition. However, we try to separately and simultaneously learn the centers of the partitions and the corresponding linear
models. Through empirical studies we show that this algorithm is very effective for learning piecewise linear regression
surfaces and it compares favorably with other state-of-art regression function learning methods.

The rest of the paper is organized as follows. In Section \ref{sec:k-plane} we discuss
$K$-plane regression algorithm, its drawbacks and possible reasons behind them. We then propose
modified $K$-plane regression algorithm in Section \ref{sec:modi-k-plane}.
We also show that modified $K$-plane regression algorithm monotonically decreases the error function after every iteration.
In Section \ref{sec:EM} we show the equivalence of our algorithm with
an EM algorithm in limiting case. Experimental results are given in Section~\ref{sec:experiments}.
We conclude the paper in Section \ref{sec:conclusion}.

\section{$K$-Plane Regression}
\label{sec:k-plane}
We begin by defining a $K$-piecewise affine function. We use the notation that a hyperplane in $\Re^d$ is
parametrized by $\bw=[\ww^T~~b]^T\in \Re^{d+1}$ where $\ww \in \Re^d$ and $b\in \Re$.
\begin{defn}\label{defn:1}
A function $f:\Re^d \rightarrow \Re$, is called \textbf{$K$-piecewise affine} if there exists a set of
$K$ hyperplanes with parameters $(\ww_1,b_1),\ldots, (\ww_K,b_K)\in \Re^{d+1}$ $((\ww_i,b_i)\neq (\ww_j,b_j), i\neq j)$,
 and sets $\tilde{S}_1,\ldots, \tilde{S}_K\subset \Re^d$
(which form a partition of $\Re^d$), such that, $f(\xx)= \ww_k^T\xx+b_k,\forall \xx \in \tilde{S}_k$.
\end{defn}
From the definition above, it is clear that $(\ww_j^T\xx+b_j-f(\xx))^2 \geq (\ww_k^T\xx+b_k-f(\xx))^2 = 0, \forall \xx \in \tilde{S}_k,\forall j\neq k$.
Also, note that a $K$-piecewise affine function may be discontinuous.

\subsection*{$K$-Plane Regression}
Let $S=\{(\xx_1 , y_1),\ldots,(\xx_N,y_N)\}$ be the training dataset, where
$(\xx_n,y_n) \in \Re^d\times\Re$. Let $\bx_n=[\xx_n^T\;\;\;1]^T,\;n=1\ldots N$.
$K$-plane regression approach tries to find a pre-fixed number of hyperplanes such that
each point in the training set is close to one of the hyperplanes.
Let $K$ be the number of hyperplanes.
Let $\bw_k,\;k=1\ldots K$, be the parameters of the hyperplanes.
$K$-plane regression minimizes the following error function.
\begin{equation}
 \nonumber E(\Theta)  =  \sum_{n=1}^N \; \min_{k \in \{ 1 \ldots K \} } (\bw_k^T\bx_n - y_n)^2
\end{equation}
where $\Theta=\{\bw_1,\ldots,\bw_K\}$. Given the parameters of $K$ hyperplanes, $\bw_1 \ldots \bw_K$,
define sets $S_k,\;k=1\ldots K$, as $S_k:=\{\xx_n\;|\;k=\amin_{j\in\{1,\ldots K\}}(\bw_j^T\bx_n-y_n)^2\}$ where we break
ties by putting $\xx_n$ in the set $S_k$ with least $k$. The sets $S_k$ are disjoint. We can now write $E(\Theta)$ as
\begin{equation}\label{cost2}
E(\Theta)= \sum_{k=1}^K \; \sum_{\xx_n \in S_k}(\bw_k^T\bx_n-y_n)^2
\end{equation}
If we fix all $S_k$, then $\bw_k$ can be found by minimizing (over $\bw$) $\sum_{\xx_n \in S_k}(\bw^T\bx_n-y_n)^2$.
However, in $E(\Theta)$ defined in equation (\ref{cost2}), the sets $S_k$ themselves are function
of the parameter set $\Theta=\{\bw_1,\ldots,\bw_K\}$.
%Hence we can not easily minimize $E(\Theta)$ given in (\ref{cost2}) using standard gradient descent.

To find $\Theta$ which minimize $E(\Theta)$ in (\ref{cost2}), we can have an EM-like algorithm
as follows. Let, after $c^{th}$ iteration, the parameter set be $\Theta^c$. Keeping $\Theta^c$ fixed,
we first find sets $S_k^c=\{\xx_n\;|\;k=\amin_{j\in\{1,\ldots K\}}(\bx_n^T\bw_j^c-y_n)^2\},\;k=1\ldots K$.
Now we keep these sets $S_k^c,\;k=1\ldots K$, fixed.
Thus the error function becomes
\begin{eqnarray}
\nonumber E^c(\Theta)&=&\sum_{k=1}^K \; \sum_{\xx_n \in S_k^c} (\bw_k^T\bx_n-y_n)^2=\sum_{k=1}^K E_k^c(\bw_k)
\end{eqnarray}
where superscript $c$ denotes the iteration and hence emphasizes the fact that the error function is evaluated
by fixing the sets $S_k^c,k=1\ldots K$, and
\begin{equation}\label{eq:subprob}
 E_k^c(\bw_k)=\sum_{\xx_n \in S_k^c} (\bw_k^T\bx_n-y_n)^2.
\end{equation}
Thus,
%$E^c(\Theta)$ becomes a sum of $K$ functions $E_k^c(\bw_k)$ in such a way that $E_k^c(\bw_k)$
%depends only on $\bw_k$ and it does not vary with other $\bw_j,\forall j\neq k$. Now
minimizing $E^c(\Theta)$ with respect to $\Theta$ boils down to minimizing each of $E_k^c(\bw_k)$ with respect to $\bw_k$.
For every $k\in\{1,\ldots, K\}$, a new weight vector $\bw_k^{c+1}$ is found using standard
linear least square solution as follows.
\begin{eqnarray}\label{eq:subprob1}
\nonumber  \bw_k^{c+1} &=& \amin_{\bw_k} \sum_{\xx_n \in S_k^c} (\bw_k^T\bx_n-y_n)^2\\
&=& \Big{[}\sum_{\xx_n \in S_k^c}\bx_n \bx_n^T\Big{]}^{-1}\big{[}\sum_{\xx_n \in S_k^c}y_n\bx_n\big{]}
\end{eqnarray}
Now we fix $\Theta^{c+1}$ and find new sets $S_k^{c+1},k=1\ldots K$, and so on.
We can now summarize $K$-plane regression algorithm. We first find sets $S_k^c,\;k=1\ldots K$,
for iteration $c$ (using $\bw_k^c,\;k=1\ldots K$). Then for each $k=1 \ldots K$, we find $\bw_k^{c+1}$
(as in equation (\ref{eq:subprob1})) by minimizing $E_k^c(\bw_k)$
which is defined in equation (\ref{eq:subprob}).
We keep on repeating these two steps until there is no significant decrement
in the error function $E(\Theta)$. $E(\Theta)$ does not change when the weight vectors
do not change or sets $S_k,\;k=1\ldots K$, do not change. The complete $K$-plane regression approach is described more formally in Algorithm \ref{algo1}.
\begin{algorithm}[t]
\caption{$K$-plane regression}
\label{algo1}
\KwIn{$\{(\xx_1,y_1),\ldots,(\xx_N,y_N)\}$}
\KwOut{$\{\bw_1\ldots \bw_K\}$}
\Begin
{
\textit{Step 1:} Initialize $\bw_k^0,k=1\ldots K$, Initialize $c=0$.\\
\textit{Step 2:} Find sets $S_k^c,k=1\ldots K$
\begin{itemize}
 \item[] $S_k^c= \{\xx_n\;|\;k=\amin_{j\in\{1\ldots K\}}(\bx_n^T\bw_j^c-y_n)^2\}$
\end{itemize}
\textit{Step 3:} Find $\ww_k^{c+1},k=1\ldots K$, as follows
\begin{itemize}
 \item[] $\bw_k^{c+1} = \Big{[}\sum_{\xx_n \in S_k^c}\bx_n \bx_n^T\Big{]}^{-1}\big{[}\sum_{\xx_n \in S_j^c}y_n\bx_n\big{]}$
\end{itemize}
\textit{Step 4:} Find sets $S_k^{c+1},k=1\ldots K$
\begin{itemize}
 \item[] $S_k^{c+1}= \{\xx_n\;|\;k=\amin_{j\in\{1\ldots K\}}(\bx_n^T\bw_j^{c+1}-y_n)^2\}$
\end{itemize}
\textit{Step 5:} Termination Criteria\\
      \uIf{$S_k^{c+1} = S_k^c,k=1\ldots K$}{
	    stop\\
      }
      \Else{
	    $c=c+1$\\
	    go to \textit{Step 3}\\
      }
}
\end{algorithm}

\subsection{Issues with $K$-Plane Regression}

In spite of its simplicity and easy updates, $K$-plane regression algorithm has some serious drawbacks
in terms of convergence and model issues.

\subsubsection*{1. Convergence to Non-optimal Solutions}

It is observed that the algorithm has serious problem of convergence to non-optimal solution.
Even when the data is generated from a piecewise linear function, the algorithm often fails to learn the structure of the target function.
\begin{figure}[t]
\begin{center}
\begin{tabular}{c}
   \includegraphics[scale=.37]{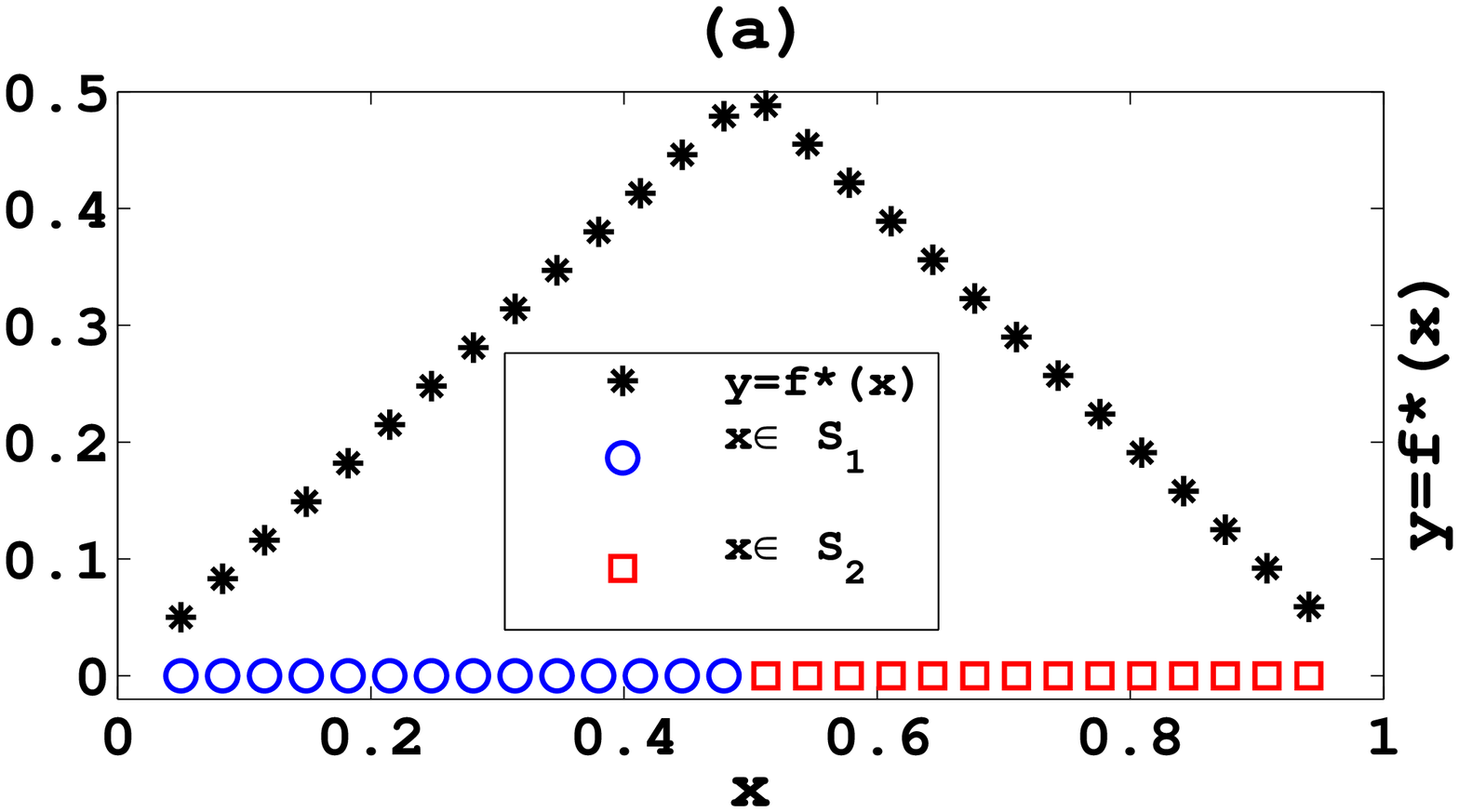}\\
 \includegraphics[scale=.37]{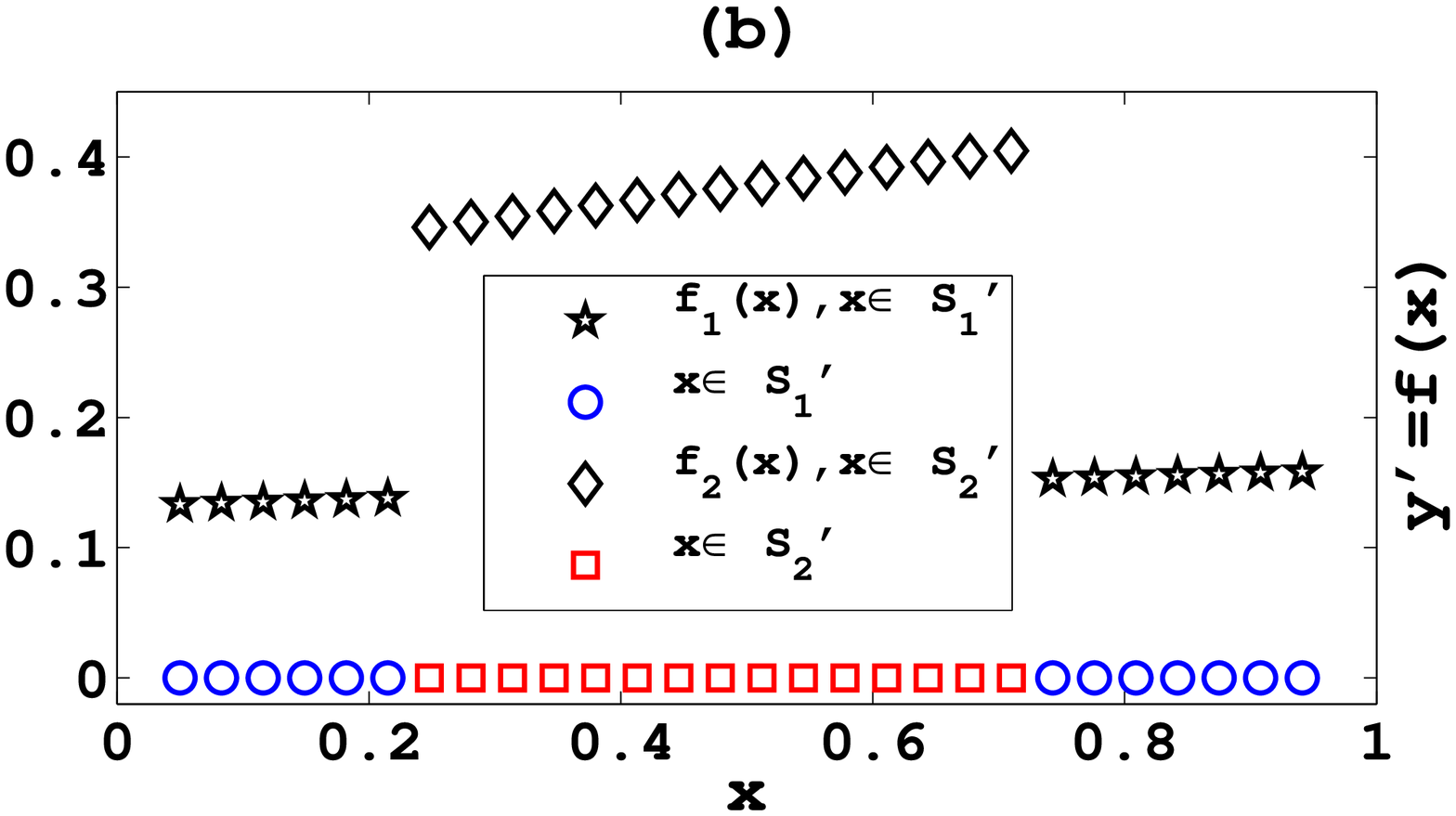}
 \end{tabular}
\caption{(a) Points sampled from a triangle shaped function $f^*(x)$, (b) function $f(x)$ learnt using $K$-plane regression algorithm given the points sampled from function $f^*(x)$.}
\label{nonconvergence1}
\end{center}
\end{figure}

Figure \ref{nonconvergence1}(a) shows points sampled from a concave (triangle shaped) 2-piecewise affine function on the real line.
At the horizontal axis, circles represent set $S_1$ and squares represent set $S_2$, where $S_1$ and $S_2$ constitute  the
correct partitioning of the training set in this problem. We see that convex hulls of sets $S_1$ and $S_2$ are disjoint.
This 2-piecewise affine function can be written (as per defn. \ref{defn:1}) by choosing $\tilde{S}_1,\;\tilde{S}_2$ to be
the convex hulls of $S_1$ and $S_2$.
% and for every $x\in \tilde{S}_1$, target values are assigned using $f_1^*(x)=x$.
%Similarly for every $x\in \tilde{S}_2$, target values are assigned using $f_2^*(x)=-x+1$.

Figure \ref{nonconvergence1}(b) shows the 2-piecewise linear function learnt using $K$-plane regression approach for a
particular initialization. $S_1'$ (represented as circles) and $S_2'$ (represented as squares) are sets corresponding
to the two lines in the learnt function. Here, $K$-plane regression algorithm completely misses the shape of the target
function. We also see that convex hulls of sets $S_1'$ and $S_2'$ intersect with each other.

\subsubsection*{2. Sensitivity to Noise}
It has been observed in practice that the simple $K$-plane regression algorithm is very sensitive to the additive noise
in the target values in training set. Under noisy examples, the algorithm performs badly. We illustrate it
later in Section~\ref{sec:experiments}.

\subsubsection*{3. Lack of Model Function}
The output of the $K$-plane regression algorithm is a set of $K$ hyperplanes.
But this algorithm does not provide a way to use these hyperplanes to predict the value for a given test point.
In other words, $K$-plane regression algorithm does not have any model function for prediction.
 We expand this issue in the next section.

\section{Modified $K$-Plane Regression}\label{sec:modi-k-plane}

As we have mentioned, given the training data, $\{(\xx_1,y_1),\ldots,(\xx_N,y_N)\}$,
the $K$-plane regression algorithm outputs $K$ hyperplanes,
$\bw_k^*,\;k=1\ldots K$. To convert this into a proper $K$-piecewise linear model in $\Re^d$, we also need to
have a $K$-partition of $\Re^d$ such that in the $k^{th}$ partition,
the appropriate model to use would be $\bw_k^*$.
We could attempt to get such a partition of $\Re^d$ by considering the convex hulls of
$S_k^*,\;k=1\ldots K$ (where $S_k^*=\{\xx_n\;|\;k=\amin_j(y_n-\bx_n^T\bw_j^*)^2\}$).
However, as we saw, under the $K$-plane regression, the convex hulls of such $S_k^*$ need not be disjoint.
Hence another method to get the required partition
is as follows. Let $\muu_k^*$ be the mean or centroid of $S_k^*$.
Then, for any point, $\xx \in \Re^d$, our prediction could be $\hat{y}=\xx^T\bw_j^*$,
where $j$ is such that $||\xx-\muu_j^*||\leq ||\xx-\muu_k^*||,\;\forall k\neq j$ (break ties arbitrarily).
This would define a proper model function with the hyperplanes obtained through $K$-plane regression. However, this
may not give good performance. Often, the convex hulls of sets, $S_k^*,\;k=1\ldots K$ (learnt using $K$-plane regression),
 have non-null intersection because each of these sets may
 contain points from different disjoint regions of $\Re^d$ (for example, see Figure \ref{nonconvergence1}).
In such cases, if we re-partition the training data using distances to different $\muu_j^*$, we may get sets
much different from $S_k^*$ and hence our final prediction on even training data may have large error.
The main reason for this problem with $K$-plane regression is that the algorithm is not really bothered about the geometry of
the sets $S_k$; it only focuses on $\bw_k$ to be a good fit for points in set $S_k$.
Moreover, in situations where same affine function works for two or more disjoint clusters,
$k$-plane regression will consider them as a single cluster as the objective function
of $k$-plane regression does not enforce that points in the same cluster should be
close to each other. As a result, the clusters learnt using $K$-plane regression approach will
have overlapping convex hulls and some times even their means may be very close to each other.
This may create problems during prediction. If we use the hyperplane whose corresponding cluster mean is closest to a point,
then we may not pick up the correct hyperplane.
This identification problem of $K$-plane regression approach results in poor performance.

Motivated by this, we modify $K$-plane regression as follows.
We want to simultaneously estimate $\bw_k,\;k=1\ldots K$ and $\muu_k,\;k=1\ldots K$,
such that if, $\bw_k$ is a good fit for $k^{th}$ partition, all the
points in $k^{th}$ partition should be closer to $\muu_k$ than any other $\muu_j^*$.
Intuitively, we can think of $\muu_k$ as center of the (cluster or) set of points $S_k$.
However, as we saw from our earlier example, if we simply make
$\muu_k$ as the centroid of the final $S_k$ learnt, all the earlier problem still remain.
Hence, in the modified $K$-plane regression, we try to
independently learn both $\bw_k$ and $\muu_k$ from the data.
To do that, we add an extra term to the objective function of $K$-plane regression approach
which tries to ensure that all the points of same cluster are close together.

%This new term added
%in the objective function is same as the error function used in $k$-means clustering algorithm.
%These cluster means can also be used to identify membership of a test point to clusters
%using Euclidean distance. Once cluster membership is known for the test point, hyperplane associated
%to that cluster can be used to predict the value of the test point. Thus, this
%modification will also provide us a proper model function.

As earlier, let the number of hyperplanes be $K$.
Here, in the modified $K$-plane regression, we have to learn $2K$  parameter vectors.
%along with $K$ hyperplane parameter vectors, we have
%$K$ additional parameter vectors to represent centers of partitions.
Corresponding to $k^{th}$ partition, we have two parameter vectors, $\bw_k \in \Re^{d+1}$ and $\muu_k \in \Re^d$.
$\bw_k$ represents parameter vector of the hyperplane associated with the $k^{th}$ partition and
$\muu_k$ represents center of the $k^{th}$ partition. Note that we want to simultaneously learn
both $\bw_k$ and $\muu_k$ for every partition.

The error function minimized by modified $K$-plane regression algorithm is
\begin{equation}\label{error2}
 E(\Theta)= \sum_{n=1}^N  \; \min_{k\in \{1,\ldots,K\}} \big{[}(\bw_k^T\bx_n-y_n)^2 + \gamma ||\xx_n-\muu_k||^2\big{]}
\end{equation}
where $\Theta=\{(\bw_1,\muu_1),\ldots ,(\bw_K,\muu_K)\}$ and $\gamma$ is a user defined parameter
which decides relative weight of the two terms.

Given $\Theta$, we define sets $S_k,\;k=1\ldots K$, as
\begin{eqnarray}
 \nonumber S_k:=\Big{\{}\xx_n\;|\;k=\amin_{j}\big{[}(\bw_j^T\bx_n-y_n)^2+\gamma ||\xx_n-\muu_j||^2\big{]}\Big{\}}
\end{eqnarray}
where we break
ties by putting $\xx_n$ in the set $S_k$ with least $k$. The sets $S_k$ are disjoint. We can now write $E(\Theta)$ as
\begin{equation}\label{error2-new}
E(\Theta)= \sum_{k=1}^K \sum_{\xx_n \in S_k}(\bw_k^T\bx_n-y_n)^2+\gamma||\xx_n-\muu_k||^2
\end{equation}

As can be seen from the above, now, for a data point, $\xx_n$ to be put in $S_k$, we not only need $\bw_k^T\bx_n$ to be close
to $y_n$ as earlier, but also need $\xx_n$ to be close to $\muu_k$, the `current center' of $S_k$. The motivation is that,
 under such a partitioning strategy, each $S_k$ would contain only points that are close to each other.
As we shall see later through simulations, this modification ensures that the algorithm performs well.
As an example of where this modification is important,  consider
 learning a piecewise linear model  which is given by same affine function
 in two (or more) disjoint regions in the feature space.
For any splitting of all examples from these two regions into two parts, there will be a good linear model that fits each of the
two parts. Hence, in the $K$-plane regression method, the $ E(\Theta)$ function (cf.eq.(\ref{cost2})) would be same for any
splitting of the examples from these two regions which means we would not learn a good model. However,
the modified $K$-plane regression approach will not treat all such splits as same because of the term involving $\muu_k$. This helps us
learn a proper piecewise linear regression function. We illustrate this in  Section~\ref{sec:experiments}.

%function as a single cluster. Instead, it will treat every non-overlapping region as a single cluster and assign the same affine function to it.
%Overall, modified $K$-plane regression approach treats a single affine function working in two non-overlapping
%regions as two different affine functions having same parameters working in
%two non-overlapping regions.
%This will help us identifying in which non-overlapping region a point belongs to based on the
%minimum distance from means of that point.
%This advantage of modified $K$-plane regression approach over $K$-plane regression approach is
%illustrated in Section~\ref{sec:experiments}.

%Given $\Theta$, we define sets $S_k,\;k=1\ldots K$, as
%\begin{eqnarray}
% \nonumber S_k:=\Big{\{}\xx_n\;|\;k=\amin_{j\in\{1,\ldots K\}}\big{[}(\bw_j^T\bx_n-y_n)^2+\gamma ||\xx_n-\muu_j||^2\big{]}\Big{\}}
%\end{eqnarray}
%where we break
%ties by putting $\xx_n$ in the set $S_k$ with least $k$. The sets $S_k$ are disjoint. We can now write $E(\Theta)$ as
%\begin{equation}\label{error2-new}
%E(\Theta)= \sum_{k=1}^K \sum_{\xx_n \in S_k}(\bw_k^T\bx_n-y_n)^2+\gamma||\xx_n-\muu_k||^2
%\end{equation}

Now consider finding $\Theta$ to minimize $E(\Theta)$ given by equation~(\ref{error2-new}). If we fix all
 $S_k$, then $\bw_k$ and $\muu_k$ can be found by minimizing (over $\bw,\;\muu$)
$\sum_{\xx_n \in S_k}(\bw^T\bx_n-y_n)^2+\gamma||\xx_n-\muu||^2$. However, in $E(\Theta)$ defined in equation~(\ref{error2-new}),
the sets $S_k$ themselves are functions of the parameter set $\Theta=\{(\bw_1,\muu_1),\ldots,(\bw_K,\muu_K)\}$.
%Thus sets $S_k,\;k=1\ldots K$ are unknown.
%Hence we can not directly minimize $E(\Theta)$ given in (\ref{error2-new}) using standard gradient descent.

To find $\Theta$ which minimize $E(\Theta)$ in (\ref{error2-new}), we can, once again,  have an EM-like
algorithm as follows. As earlier, let the parameter set after $c^{th}$ iteration be $\Theta^c$. Keeping $\Theta^c$ fixed,
we find the sets $S_k^c,\;k=1\ldots K$, as follows
\begin{equation}
  S_k^c=\big{\{}\xx_n\;|\;k=\amin_{j}[(\bx_n^T\bw_j^c-y_n)^2+\gamma ||\xx_n-\muu^c_j||^2]\big{\}}
\label{eq:s-k-c}
\end{equation}
Now we keep these sets $S_k^c$ fixed. Thus the error function becomes
\begin{eqnarray}
\nonumber E^c(\Theta)&=&\sum_{k=1}^K \sum_{\xx_n \in S_k^c} [(\bw_k^T\bx_n-y_n)^2 +\gamma ||\xx_n-\muu_k||^2]\\
\nonumber &=&\sum_{k=1}^KE^c(\bw_k,\muu_k)
\end{eqnarray}
where superscript $c$ denotes the iteration and emphasizes the fact that the error function is evaluated by fixing the sets $S_k^c,\;k=1\ldots K$.
Thus minimizing $E^c(\Theta)$ with respect to $\Theta$ boils down to minimizing each of $E_k^c(\bw_k,\muu_k)$
with respect to $(\bw_k,\muu_k)$. Each $E_k^c(\bw_k,\muu_k)$ is composed of two terms. The first term depends only on $\bw_k$
and it is the usual sum of squares of errors. The second term depends only on $\muu_k$ and it is the usual
cost function of $K$-means clustering. Thus,
%only first term is used to find $\bw_k^{c+1},\;k=1\ldots K$. Similarly, only second term is used to find $\muu_k^{c+1},\;k=1\ldots K$.
%For every $k=1\ldots K$,
the update equations for finding $\bw_k^{c+1}$ and $\muu_k^{c+1}$, $k=1\ldots K$, are
\begin{eqnarray}
\nonumber \bw_k^{c+1}&=& \amin_{\bw_k}\sum_{j=1}^K E^c(\bw_j,\muu_j)\\
\nonumber &=&\amin_{\bw} \sum_{\xx_n \in S_k^c} (\bw^T\bx_n-y_n)^2\\
& =&\Big{[}\sum_{\xx_n \in S_k^c}\bx_n \bx_n^T\Big{]}^{-1}\big{[}\sum_{\xx_n \in S_k^c}y_n\bx_n\big{]} \label{modified1}\\
\nonumber \muu_k^{c+1} & = & \amin_{\muu_k}\sum_{j=1}^K E^c(\bw_j,\muu_j)\\
 &=&\amin_{\muu} \sum_{\xx_n \in S_k^c}  ||\xx_n-\muu||^2 = \frac{1}{|S_k^c|}\sum_{\xx_n \in S_k^c} \xx_n \label{modified2}
\end{eqnarray}
\begin{algorithm}[t]
\caption{Modified $K$-plane regression}
\label{algo2}
\KwIn{$\{(\xx_1,y_1)\ldots (\xx_N,y_N)\}$}
\KwOut{$\{(\bw_1,\muu_1)\ldots (\bw_K,\muu_K)\}$}
\Begin
{
\textit{Step 1:} Initialize $(\ww_k^0,\muu_k^0),\;k=1\ldots K$. $c=0$.\\
\textit{Step 2:} Find $S_k^c,k=1\ldots K$, as follows
\begin{itemize}
 \item[] $S_k^c=\big{\{}\xx_n\;|\;k=\amin_{j}[(\bx_n^T\bw_j^c-y_n)^2 + \gamma ||\xx_n-\muu_j^c||^2]\big{\}}$
\end{itemize}
\textit{Step 3:} Find $\bw_k^{c+1},\muu_k^{c+1},k=1\ldots K$, as follows
\begin{eqnarray}
 \nonumber \bw_k^{c+1} &=& \Big{[}\sum_{\xx_n \in S_k^c}\bx_n \bx_n^T\Big{]}^{-1}\big{[}\sum_{\xx_n \in S_k^c}y_n\bx_n\big{]}\\
\nonumber \muu_k^{c+1} & = &\frac{1}{|S_k^c|}\sum_{\xx_n \in S_k^c} \xx_n
\end{eqnarray}
\textit{Step 4:} Find $S_k^{c+1},k=1\ldots K$, as follows
\begin{itemize}
 \item[] $S_k^{c+1}=\big{\{}\xx_n\;|\;k=\amin_{j}[(\bx_n^T\bw_j^{c+1}-y_n)^2 + \gamma ||\xx_n-\muu_j^{c+1}||^2]\big{\}}$
\end{itemize}
\textit{Step 5:} Termination Criteria\\
      \uIf{$S_k^{c+1}=S_k^c,\forall k$}{
	    stop\\
      }
      \Else{
	    $c=c+1$\\
	    go to \textit{Step 3}\\
      }
}
\end{algorithm}
Once we compute $\Theta^{c+1}$, we find new sets $S_k^{c+1},k=1\ldots K$, and so on.

In summary, the modified $K$-plane regression algorithm works as follows.
We first find sets $S_k^c,\;k=1\ldots K$, for iteration $c$ (using $(\bw_k^c,\muu_k^c),\;k=1\ldots K$) as given by eq.(\ref{eq:s-k-c}).
Then for each $k=1 \ldots K$, we find $(\bw_k^{c+1},\muu_k^{c+1})$ (as in equation (\ref{modified1}) and (\ref{modified2}))
by maximizing $E_k^c(\bw_k,\muu_k)$.
We keep on repeating these two steps until there is no significant decrement
in the error function $E(\Theta)$.
%$E(\Theta)$ does not change when the weight vectors
%and centers do not change or sets $S_k,\;k=1\ldots K$, do not change.
The complete description of modified $K$-plane regression approach is given in Algorithm \ref{algo2}.
\subsection*{Monotone Error Decrement Property}
Now we will show that modified $K$-plane regression algorithm monotonically decreases the error function defined by
equation (\ref{error2}).\footnote{ Note that this does not necessarily mean that we find the global minimum of the error function.
More importantly, we can not claim that minimizing the error as defined would lead to learning of a good piece-wise linear model.
 We note here that the simple $K$-plane regression algorithm also results in monotonic decrease in the error as defined for that algorithm
 even though it may not learn good models. However, the fact that the algorithm continuously decreases the error at each iteration,
is an important property for a learning algorithm.}

\begin{thm}\label{thm1}
Algorithm \ref{algo2} monotonically decreases the cost function given by equation (\ref{error2})
after every iteration.
\end{thm}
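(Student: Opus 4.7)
The plan is to mimic the standard two-step monotonicity argument used for $K$-means and generalized EM. The key is to view the error as the minimum, over all admissible partitions of the training indices, of a joint objective in $(\Theta,\{S_k\})$, and then show that each of Step 3 (parameter update) and Step 2 (re-partition) of Algorithm \ref{algo2} weakly decreases this joint objective.

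Concretely, I would introduce an auxiliary functional
\begin{equation*}
\tilde{E}(\Theta,\{S_k\}) \;=\; \sum_{k=1}^{K}\sum_{\xx_n\in S_k}\bigl[(\bw_k^T\bx_n-y_n)^2+\gamma\|\xx_n-\muu_k\|^2\bigr],
\end{equation*}
defined for any partition $\{S_1,\dots,S_K\}$ of the training indices. Observe that because $\min_k$ in equation (\ref{error2}) can be pulled out index by index, one has the variational identity $E(\Theta)=\min_{\{S_k\}}\tilde{E}(\Theta,\{S_k\})$, and furthermore the minimum is attained by the assignment rule used in Step 2 of Algorithm \ref{algo2}. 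In particular, the sets $S_k^c$ produced at iteration $c$ satisfy $E(\Theta^c)=\tilde{E}(\Theta^c,\{S_k^c\})$.

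Next I would verify the two inequalities separately. For the parameter step (Step 3), with the partition $\{S_k^c\}$ frozen, $\tilde{E}$ splits as a sum $\sum_k E_k^c(\bw_k,\muu_k)$ in which $\bw_k$ and $\muu_k$ appear in decoupled blocks: a weighted least-squares term in $\bw_k$ and a $K$-means-type term in $\muu_k$. Equations (\ref{modified1}) and (\ref{modified2}) are exactly the closed-form minimizers of these convex sub-problems, so $\tilde{E}(\Theta^{c+1},\{S_k^c\})\le\tilde{E}(\Theta^c,\{S_k^c\})$. For the assignment step (Step 2 applied at iteration $c+1$), the variational identity gives $E(\Theta^{c+1})=\tilde{E}(\Theta^{c+1},\{S_k^{c+1}\})\le\tilde{E}(\Theta^{c+1},\{S_k^c\})$, since $\{S_k^{c+1}\}$ is the minimizer over all partitions with $\Theta=\Theta^{c+1}$ fixed.

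Chaining these pieces yields $E(\Theta^{c+1})\le\tilde{E}(\Theta^{c+1},\{S_k^c\})\le\tilde{E}(\Theta^c,\{S_k^c\})=E(\Theta^c)$, which is the claimed monotonicity. I do not foresee a serious obstacle: the only subtlety worth being careful about is the tie-breaking convention in the definition of $S_k^c$ (already fixed in the paper by assigning to the smallest $k$), which ensures that the assignment step returns a genuine partition and that the ``min over partitions'' in the variational identity is attained; the rest is bookkeeping over the decoupled least-squares and centroid sub-problems.
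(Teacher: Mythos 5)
Your proposal is correct and follows essentially the same route as the paper's proof: the paper's $E^c(\Theta)$ is exactly your $\tilde{E}(\Theta,\{S_k^c\})$, its inequality (\ref{eq:thm1-eq1}) is your frozen-partition parameter step, and its inequality (\ref{eq:thm1-eq2}) (proved there by splitting over $S_j^c\cap S_k^{c+1}$) is your observation that the new assignment minimizes $\tilde{E}(\Theta^{c+1},\cdot)$ over partitions. The chain $E(\Theta^c)=\tilde{E}(\Theta^c,\{S_k^c\})\ge\tilde{E}(\Theta^{c+1},\{S_k^c\})\ge E(\Theta^{c+1})$ is the same as the paper's $E^c(\Theta^c)\ge E^c(\Theta^{c+1})\ge E^{c+1}(\Theta^{c+1})$.
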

\begin{proof}
We have
\begin{equation}
 \nonumber E^c(\Theta^c)=\sum_{k=1}^K \sum_{\xx_n \in S_k^c}\big{[}(\bx_n^T\bw_k^c-y_n)^2 + \gamma ||\xx_n-\muu_k^c||^2\big{]}
\end{equation}
Given the sets $S_k^c,\;k=1\ldots K$, parameters $(\bw_k^{c+1},\muu_k^{c+1}),\;k=1\ldots K$, are found using equation (\ref{modified1}) and (\ref{modified2}), in the following way.
\begin{eqnarray}
 \nonumber \bw_k^{c+1} &=& \amin_{\bw_k} \sum_{\xx_n \in S_k^c} (\bw_k^T\bx_n-y_n)^2   \\
\nonumber \muu_k^{c+1} &=& \amin_{\muu_k}\sum_{\xx_n \in S_k^c} ||\xx_n-\muu_k||^2
\end{eqnarray}
Thus, we have
\begin{eqnarray}
\nonumber & &\sum_{\xx_n \in S_k^c} (\bx_n^T\bw_k^c-y_n)^2 \geq \sum_{\xx_n \in S_k^c} (\bx_n^T\bw_k^{c+1}-y_n)^2,\;k=1\ldots K \\
\nonumber & &\sum_{\xx_n \in S_k^c} ||\xx_n-\muu_k^c||^2 \geq  \sum_{\xx_n \in S_k^c} ||\xx_n-\muu_k^{c+1}||^2,\; k=1\ldots K
\end{eqnarray}
This will further give us
\begin{eqnarray}
%\nonumber &&\sum_{\xx_n \in S_k^c}(\bx_n^T\bw_k^c-y_n)^2 + \gamma ||\xx_n-\muu_k^c||^2\\
%\nonumber &\geq & \sum_{\xx_n \in S_k^c} (\bx_n^T\bw_k^{c+1}-y_n)^2 + \gamma  ||\xx_n-\muu_k^{c+1}||^2,\;\forall k\\
\nonumber && \sum_{k=1}^K \sum_{\xx_n \in S_k^c}(\bx_n^T\bw_k^c-y_n)^2 + \gamma ||\xx_n-\muu_k^c||^2 \geq\\
\nonumber && \sum_{k=1}^K \sum_{\xx_n \in S_k^c} (\bx_n^T\bw_k^{c+1}-y_n)^2 + \gamma  ||\xx_n-\muu_k^{c+1}||^2\\
 &\Rightarrow&  E^c(\Theta^c) \geq  E^c(\Theta^{c+1}) \label{eq:thm1-eq1}
\end{eqnarray}
Given $\Theta^{c+1}$, sets $S_k^{c+1},k=1\ldots K$, are found as follows
\begin{eqnarray}
 \nonumber S_k^{c+1}=\big{\{}\xx_n\;|\;k=\amin_{j}[(\bx_n^T\bw_j^{c+1}-y_n)^2 + \gamma
  ||\xx_n-\muu_j^{c+1}||^2]\big{\}}
\end{eqnarray}
Using $S_k^{c+1},\;k=1\ldots K$, we can find $E^{c+1}(\Theta^{c+1})$, which is
\begin{eqnarray}
\nonumber &&E^{c+1}(\Theta^{c+1})\\
 \nonumber &=& \sum_{k=1}^K \sum_{\xx_n \in S_k^{c+1}}(\bx_n^T\bw_k^{c+1}-y_n)^2 + \gamma  ||\xx_n-\muu_k^{c+1}||^2\\
&=&\sum_{j,k=1}^K \sum_{\xx_n \in S_j^c\cap S_k^{c+1}}(\bx_n^T\bw_k^{c+1}-y_n)^2 + \gamma  ||\xx_n-\muu_k^{c+1}||^2\nonumber
\end{eqnarray}
By the definition of sets $S_k^{c+1}$, $\forall \xx_n \in S_k^{c+1}$, we have,
\begin{eqnarray}
\nonumber &&(\bx_n^T\bw_k^{c+1}-y_n)^2 + \gamma ||\xx_n-\muu_k^{c+1}||^2\\
\nonumber & \leq &(\bx_n^T\bw_j^{c+1}-y_n)^2 + \gamma
  ||\xx_n-\muu_j^{c+1}||^2,\;\forall j\neq k
\end{eqnarray}
which is also true for any $\xx_n \in S_k^{c+1}\cap S_j^c$. Thus
\begin{eqnarray}
&& E^{c+1}(\Theta^{c+1}) \nonumber \\
& \leq & \sum_{j,k=1}^K  \sum_{\xx_n \in S_j^c\cap S_k^{c+1} }(\bx_n^T\bw_j^{c+1}-y_n)^2 + \gamma  ||\xx_n-\muu_j^{c+1}||^2\nonumber \\
\nonumber &=&\sum_{j=1}^K \sum_{\xx_n \in S_j^c}(\bx_n^T\bw_j^{c+1}-y_n)^2 + \gamma  ||\xx_n-\muu_j^{c+1}||^2 \\
&= & E^c(\Theta^{c+1})\label{eq:thm1-eq2}
\end{eqnarray}
Combining (\ref{eq:thm1-eq1}) and (\ref{eq:thm1-eq2}), we get  $E^c(\Theta^c) \geq E^c(\Theta^{c+1})  \geq  E^{c+1}(\Theta^{c+1})$.
Which means after one complete iteration modified $K$-plane regression Algorithm decreases the error function.
\end{proof}

\section{EM View of Modified $K$-Plane Regression Algorithm}
\label{sec:EM}
Here, we show that modified $K$-plane regression algorithm presented in Section \ref{sec:modi-k-plane}
can be viewed as a limiting case of an EM algorithm.
In the general $K$-plane regression idea,
the difficulty is due to the following credit assignment problem.
When we decompose the problem into $K$ sub-problems, we do not know
which $\xx_n$ should be considered in which sub-problem.
We can view this as the missing information in the EM formulation.

Recall that $S=\{(\xx_1,y_1),\ldots,(\xx_N,y_N)\}$ is the training data set. In the EM framework,
$S=\{(\xx_1,y_1),\ldots,(\xx_N,y_N)\}$ can be thought of as incomplete data.
The missing data would be $\zz_n=[z_{n1}~\ldots~z_{nK}]^T$, where
$z_{nk}\in \{0,1\},\;\forall n,\;\forall k$, such that, $\sum_{k=1}^K z_{nk}=1,\;\forall n$.
Then $z_{nk}$ are defined as,
\begin{eqnarray}
 \nonumber z_{nk}=\begin{cases}
                   1,  & \text{if }k=\amin_{j} \big{[}(\bw_j^T\bx_n-y_n)^2+\gamma||\xx_n-\muu_j||^2\big{]}\\
                   0,  & \text{otherwise}
                  \end{cases}
\end{eqnarray}
This gives us the following probability model
\begin{eqnarray}\label{eq:model1}
\nonumber \begin{cases}
P(\xx_n,y_n|z_{nk}=1,\Theta)&= p(\xx_n,y_n|\bw_k,\muu_k)\\
&= p(\xx_n|\muu_k)p(y_n|\xx_n,\bw_k)\\
P(\xx_n,y_n|\zz_n,\Theta)&=\sum_{k=1}^K z_{nk}p(\xx_n|\muu_k)p(y_n|\xx_n,\bw_k)
\end{cases}
\end{eqnarray}
In our formulation, $\muu_k$ represents the center of the set of all $\xx$ for which the $k^{th}$ linear model is appropriate. Hence we
%assume, for $k^{th}$ component, $\xx_n$ is generated form a spherical ball with center $\muu_k$.
 take $p(\xx|\muu_k)=\mathcal{N}(\muu_k,\frac{\epsilon}{\gamma}I)$, a multivariate Gaussian in which the covariance matrix
is given by $\frac{\epsilon}{\gamma} I$, where $\frac{\epsilon}{\gamma}(\epsilon,\gamma>0)$ is a variance parameter,
and $I$ is the identity matrix. This covariance matrix is common for all $K$ components. We assume that
the target values given in the training set may be corrupted with zero mean Gaussian noise.  Thus,
for $k^{th}$ component, the target value is assigned using $\bw_k$ as $y=\bw_k^T\bx
+e$, where $e$ is Gaussian noise with mean 0 and
variance $\epsilon$. Variance $\epsilon$ is kept same for all $K$ components.
Thus, $p(y|\xx,\bw_k)=\mathcal{N}(\bw_k^T\bx_n,\epsilon)$, a Gaussian with mean $\bw_k^T\bx_n$ and
variance $\epsilon$. Thus,
\begin{eqnarray}
 \nonumber && p(\xx_n,y_n|\bw_k,\muu_k)= \mathcal{N}(\muu_k,{\frac{\epsilon}{\gamma}}I)
\mathcal{N}(\bw_k^T\bx_n,\epsilon)\\
\nonumber &=& \frac{\gamma^{\frac{d}{2}}}{(2\pi\epsilon)^{\frac{d}{2}}}
\exp\big{(}-\frac{\gamma}{2\epsilon}||\xx_n-\muu_k||^2\big{)}\times \\
\nonumber &&
\frac{1}{(2\pi\epsilon)^{\frac{1}{2}}}\exp\big{(}-\frac{1}{2\epsilon}(\bw_k^T\bx_n-y_n)^2\big{)}\\
\nonumber &=&\frac{1}{L} \exp\big{(}-\frac{1}{2\epsilon}[(y_n-\bw_k^T\bx_n)^2+\gamma||\xx_n-\muu_k||^2]\big{)}
\end{eqnarray}
where $L=\Big{[}\frac{(2\pi\epsilon)^{(d+1)}}{\gamma^d}\Big{]}^{\frac{1}{2}}$.
Note that $\epsilon$ and $\gamma$ are assumed to be fixed constant, instead of parameters to be
re-estimated. Thus, the density model for incomplete data becomes
\begin{eqnarray}\label{eq:model2}
\nonumber && p(\xx_n,y_n|\Theta)\\
&=&\sum_{k=1}^K \frac{\I_{\{k=a_n\}}}{L} \exp\Big{(}-\frac{(y_n-\bw_k^T\bx_n)^2+\gamma||\xx_n-\muu_k||^2}{2\epsilon}\Big{)} \nonumber \\
&=&\frac{1}{L} \exp\Big{(}-\frac{1}{2\epsilon}\min_k[(y_n-\bw_k^T\bx_n)^2+\gamma||\xx_n-\muu_k||^2]\Big{)}
\end{eqnarray}
Negative of the log-likelihood under the model given in equation (\ref{eq:model2})
is same as the error function minimized in the modified $K$-plane regression algorithm.
Hence, we can now  compute the EM iteration for maximizing log-likelihood computed from (\ref{eq:model1}).

However, the incomplete data log-likelihood under our probability model (\ref{eq:model2}) becomes non-differentiable due to the hard minimum function.
To get around this, we change the probability model for incomplete data into a mixture model with mixing coefficients as part of $\Theta$:
\begin{eqnarray}\label{eq:model3}
 p(\xx_n,y_n|\Theta)=\sum_{k=1}^K\frac{\alpha_k}{L} \exp\Big{(}-\frac{(y_n-\bw_k^T\bx_n)^2+\gamma ||\xx_n-\muu_k||^2}{2\epsilon}\Big{)}
\end{eqnarray}
where $\alpha_{k}=P(z_{nk}=1),\;\forall n$; $\alpha_k \geq 0,\;\sum_{k=1}^K \alpha_k =1$, and
$\Theta=\{(\alpha_1,\bw_1,\muu_1),\ldots,(\alpha_K,\bw_K,\muu_K)\}$. Note that here,
\begin{eqnarray}
\nonumber p(y_n|\xx_n,\Theta)& \propto & \sum_{k=1}^K \frac{\alpha_k \exp{(-\frac{\gamma}{2\epsilon}||\xx_n-\muu_k||^2})}{\sum_{j=1}^K\alpha_j \exp{(-\frac{\gamma}{2\epsilon}||\xx_n-\muu_j||^2)}}\times\\
\nonumber &&\exp{(-\frac{1}{2\epsilon}(y_n-\bw_k^T\bx_n)^2)}
\end{eqnarray}
which is same as the model
described in \cite{Xu1995} for a mixture of experts network.
The incomplete data log-likelihood given by (\ref{eq:model3}) will now be smooth and we can use EM algorithm to maximize the likelihood.
However, the model given in (\ref{eq:model3}) is somewhat different from the one in
equation (\ref{eq:model2}) which was used in Section \ref{sec:modi-k-plane}.

We, now derive the iterative scheme under EM framework using the model specified by
equation (\ref{eq:model1}) and (\ref{eq:model3}) and show that in the
limit $\epsilon \rightarrow 0$, the iterative scheme becomes the modified $K$-plane regression algorithm that we presented in Section
\ref{sec:modi-k-plane}.
\subsection{EM Algorithm}
We now describe EM algorithm with $S=\{(\xx_1,y_1),\ldots,(\xx_N,y_N)\}$ as incomplete data and $\bar{S}=\{(\xx_1,y_1,\zz_1),\ldots,(\xx_N,y_N,\zz_N)\}$ as complete data
and under the model specified by (\ref{eq:model1}) and (\ref{eq:model3}). The complete data log-likelihood is
\begin{eqnarray}
 \nonumber & & l_{\text{complete}}(\Theta;\bar{S})= \ln \big{[} \prod_{n=1}^N \prod_{k=1}^K
[P(\xx_n,y_n,z_{nk}|\Theta)]^{z_{nk}}\big{]}\\
\nonumber &=& \sum_{n=1}^N \sum_{k=1}^K z_{nk} \ln\big{[}P(z_{nk})P(\xx_n,y_n|z_{nk},\Theta)\big{]}\\
\nonumber &=& \sum_{n=1}^N \sum_{k=1}^K z_{nk} \big{[}\ln \alpha_k L - \frac{(y-\bw_k^T\bx_n)^2}{2\epsilon}-\frac{\gamma ||\xx_n-\muu_k||^2}{2\epsilon}\big{]}
\end{eqnarray}
\textbf{E-Step: }In E-Step, we find $Q(\Theta,\Theta^c)$ which is the expectation of complete
data log-likelihood.
\begin{eqnarray}
 \nonumber && \mathcal{Q}(\Theta,\Theta^c)=\E_{\{\zz_1,\ldots,\zz_N\}}\big{[}l_{\text{complete}}
(\Theta;\bar{S})|\Theta^c\big{]}\\
\nonumber &=& \sum_{n=1}^N \sum_{k=1}^K \big{[}\ln \alpha_k L - \frac{(y-\bw_k^T\bx_n)^2-\gamma ||\xx_n-\muu_k||^2}{2\epsilon}\big{]}\times\\
\nonumber && P(z_{nk}=1|\xx_n,y_n,\Theta^c)
\end{eqnarray}
\textbf{M-Step: }In the M-Step, we maximize $\mathcal{Q}(\Theta,\Theta^c)$ with respect to $\Theta$
to find out new parameter set $\Theta^{c+1}$. This will give us following update equations.
\begin{eqnarray}
 \nonumber \alpha_k^{c+1} &=& \frac{1}{N} \sum_{n=1}^N P(z_{nk}=1|\xx_n,y_n,\Theta^c)\\
\nonumber \bw_k^{c+1}&=& \big{[}\sum_{n=1}^N P(z_{nk}=1|\xx_n,y_n,\Theta^c)\bx_n\bx_n^T \big{]}^{-1}\times\\
\nonumber && \big{[}\sum_{n=1}^N P(z_{nk}=1|\xx_n,y_n,\Theta^c)y_n \bx_n \big{]}\\
\nonumber \muu_k^{c+1} &=& \frac{\sum_{n=1}^N P(z_{nk}=1|\xx_n,y_n,\Theta^c)\xx_n}{\sum_{n=1}^N P(z_{nk}=1|\xx_n,y_n,\Theta^c)}
\end{eqnarray}
where $P(z_{nk}=1|\xx_n,y_n,\Theta^c)$ is given by
\begin{eqnarray}
 \nonumber && P(z_{nk}=1|\xx_n,y_n,\Theta^c)\\
 \nonumber &=&  \frac{\alpha_k^c p(\xx_n,y_n|z_{nk}=1,\Theta^c)}{\sum_{j=1}^K \alpha_j^c p(\xx_n,y_n|z_{nk}=1,\Theta^c)}\\
\nonumber &=&  \frac{\alpha_k^c \exp\big{(}-\frac{1}{2\epsilon}[(y_n-\bx_n^T\bw_k^c)^2+\gamma ||\xx_n-\muu_k^c||^2]\big{)}}{\sum_{j=1}^K \alpha_j^c \exp\big{(}-\frac{1}{2\epsilon}[(y_n-\bx_n^T\bw_j^c)^2+\gamma ||\xx_n-\muu_j^c||^2]\big{)}}
\end{eqnarray}
\subsection{Limiting Case ($\epsilon \rightarrow 0$)}
Now consider $\lim_{\epsilon \rightarrow 0} P(z_{nk}=1|\xx_n,y_n,\Theta^c)$. Let $a_n^c=\amin_{j\in\{1,\ldots K\}} \big{[}(y_n-\bx_n^T\bw_j^c)^2 +\gamma ||\xx_n-\muu_k^c||^2\big{]}$. When $\epsilon \rightarrow 0$, then in the denominator, the term corresponding to index $a_n^c$ will go to zero most slowly and hence $\lim_{\epsilon \rightarrow 0} P(z_{nk}=1|\xx_n,y_n,\Theta^c)=\I_{\{k=a_n^c\}}$,
where $\I_{\{k=a_n^c\}}=1$ if $k=a_n^c$ and zero otherwise. In this limiting case, the EM updates of $\bw_k$ and $\muu_k$ will be same as updates of modified $K$-plane regression algorithm.

\section{Experiments}\label{sec:experiments}
In this section we present empirical results to show the effectiveness of modified $K$-plane regression approach.
We demonstrate how the learnt functions differ among various regression approaches using two synthetic problems.
We test the performance of our algorithm on several real datasets also.
We compare our approach with hinging hyperplane algorithm which is the best state-of-art regression tree algorithm and
 with support vector regression (SVR) which is among the best generic regression approaches
today.

\subsection*{Dataset Description}
The two synthetic datasets are generated as follows:
\begin{enumerate}
 \item \textbf{Problem 1: }In this, points are uniformly sampled from the interval $[0~~~5]$.
Then, for every point $x$ the target values $y$ are assigned as $y=f(x)+\epsilon$, where
\begin{eqnarray}
 \nonumber f(x)=\begin{cases}
                 x,   & \text{if } 0\leq x<1\\
2-x, & \text{if } 1\leq x <2 \\
\frac{1}{3}(7-2x),  & \text{if } 2\leq x < 3.5\\
\frac{1}{3}(2x-7),  & \text{if } 3.5\leq x \leq 5
                \end{cases}
\end{eqnarray}
and $\epsilon$ is a Gaussian random variable with zero mean and variance 0.01.
500 points are generated for training and 500 points are generated for testing.
\item \textbf{Problem 2: }Points are uniformly sampled from the interval $[0~~~3]$.
Then, for every point $x$ the target values $y$ are assigned as $y=f(x)$, where
\begin{eqnarray}
 \nonumber f(x)=\begin{cases}
                 x,   & \text{if } 0\leq x<1\\
1, & \text{if } 1\leq x <2 \\
x, & \text{if } 2\leq x \leq 3
                \end{cases}
\end{eqnarray}
We also generate $y'$ as $y'=f(x)+\epsilon$, where $\epsilon$ is a Gaussian random variable with zero mean
and variance 0.01. 300 points are generated for training and 300 points are generated for testing.
\end{enumerate}
Note that both the above functions are discontinuous.

We also present the experimental comparisons on 4 `real' datasets downloaded
from UCI ML repository \cite{Asuncion+Newman:2007} which are described in Table \ref{table:details}.
In our simulations, we scale all feature values to the range of $[-1~~ 1]$.

\begin{table}[h]
\begin{center}
 \begin{tabular}{|p{1.2in}|p{.7in}|p{.7in}|}
\hline
Data set                & Dimension &   \# Points  \\ \hline
Boston Housing          & 13        &   506        \\ \hline
Abalone                 & 8         &   4177       \\ \hline
Auto-mpg                &  7        &   392        \\ \hline
Computer activity       &  12      &   8192       \\ \hline
%Concrete                & 8         &   595        \\ \hline
\end{tabular}
\caption{Details of real world datasets used from UCI ML repository.} \label{table:details}
\end{center}
\end{table}

\subsection*{Experimental Setup}We implemented $K$-plane regression and modified $K$-plane regression algorithms
in MATLAB.\footnote{
For $K$-plane regression, there is no specified model function which
can be used to predict the value for a test point. In our simulations, to assign value for any test point using $K$-plane regression,
we use the same methodology as modified $K$-plane regression approach. That is, using the $\bw_k$ learnt, we obtain sets $S_k$ as explained in
Section~\ref{sec:k-plane}; then we find the $k$ such that centroid of $S_k$ is closest to the test point and
then use that $\bw_k$ to predict the target.}
%That is, we first find the cluster whose center is closest to the test points.
%We use the hyperplane associated to that cluster for assigning value.
We have also implemented hinging hyperplane method in MATLAB.
For support vector regression, we used Libsvm \cite{Chih2001} code.
All the simulations are done on a PC (Core2duo, 2.3GHz, 2GB RAM).

Modified $K$-plane regression has one user defined parameter which is $\gamma$.
We search for the best value of $\gamma$ using 10-fold cross validation
and use that value in our simulations.
Both $K$-plane regression and modified $K$-plane regression approaches require $K$ (number of hyperplanes)
to be fixed apriori. In our experiments, we change the value of $K$ from 2 to 5.
Similarly, in hinging hyperplane method, maximum number of hinge functions should be specified.
In our simulations, this number is varied from 1 to 5.
 Support vector regression has three
user defined parameters: penalty parameter $C$, width parameter $\sigma$ for Gaussian kernel and tolerance parameter $\epsilon$.
Best values for these parameters are found using 10-fold cross-validation and the results reported are with these parameters.

\subsection*{Simulation Results: Synthetic Problems}
%We first discuss the results on synthetic datasets.

\textbf{Problem 1: }Figure \ref{fig:1d-example1} shows functions learnt using different approaches on problem 1 and
Table \ref{Table:problem1} shows MSE achieved with different approaches on a test set. Hinge hyperplane approach
and support vector regression (SVR) methods give continuous approximations to the function $f$ (see Figs. \ref{fig:1d-example1}(c) and \ref{fig:1d-example1}(d)).
While SVR gets the  shape of the function well, the function learnt using SVR is not piecewise linear.
Figure \ref{fig:1d-example1}(e) shows 4-piecewise affine function learnt using $K$-plane regression method.
We see that the $K$-plane regression
approach completely misses the shape of the function which results in a very high MSE. In contrast,
 as can be seen from figure \ref{fig:1d-example1}(f), modified $K$-plane regression approach learns
the discontinuous function $f$ exactly (even though the function values given in training set are noisy).

Recall that in modified K-plane regression, we essentially partition the data and learn a hyperplane as well as a `center' or `mean' (which was called
$\muu_k$ in the algorithm) for each partition. The target function in this example has four linear pieces. If we got the exact partitioning of the
input space then the ideal centers would be (0.5,1.5,2.75,4.25).
The means learnt using modified $K$-plane regression approach are (0.495,1.495,2.745,4.25).
 This example demonstrates that our modified $K$-plane regression algorithm is
robust to additive noise, and that it can learn discontinuous functions also well. This example also shows that the
simple-minded $K$-plane regression performs poorly when there is noise in the training set.
%The means learnt using $K$-plane regression
%are (1.02,1.80,2.78,4.28). We see that the first two means are very different from the true means.
\begin{table}
\begin{center}
  \begin{tabular}{|p{.9in}|p{1.3in}|p{.45in}|}
\hline
Method      &     Parameters        &      MSE          \\ \hline
$K$-plane   &     \# hyperplanes = 4 &    0.0557            \\ \hline
Modified $K$-plane &     \# hyperplanes = 4 &   0.037       \\ \hline
Hinge Hyperplane & \# hinges = 6 &   0.0451\\ \hline
SVR  &$C=64$, $\sigma=16$, $\epsilon=2^{-5}$ &   0.013 \\
\hline
\end{tabular}
\caption{MSE of different regression approaches on problem 1.} \label{Table:problem1}
\end{center}
\end{table}

\begin{figure*}
 \begin{center}
  \begin{tabular}{ccc}
 \includegraphics[scale=.38]{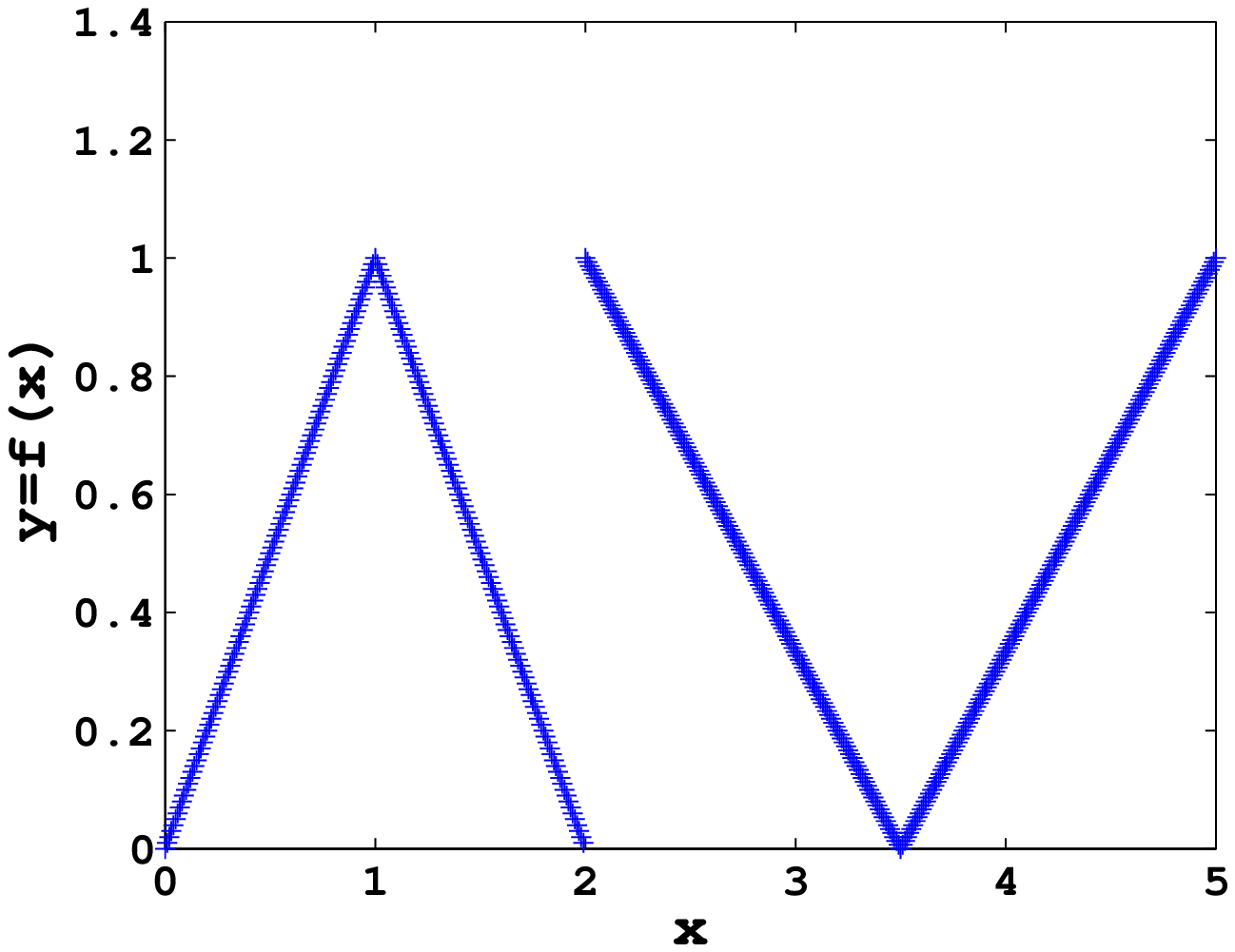}  &  \includegraphics[scale=.38]{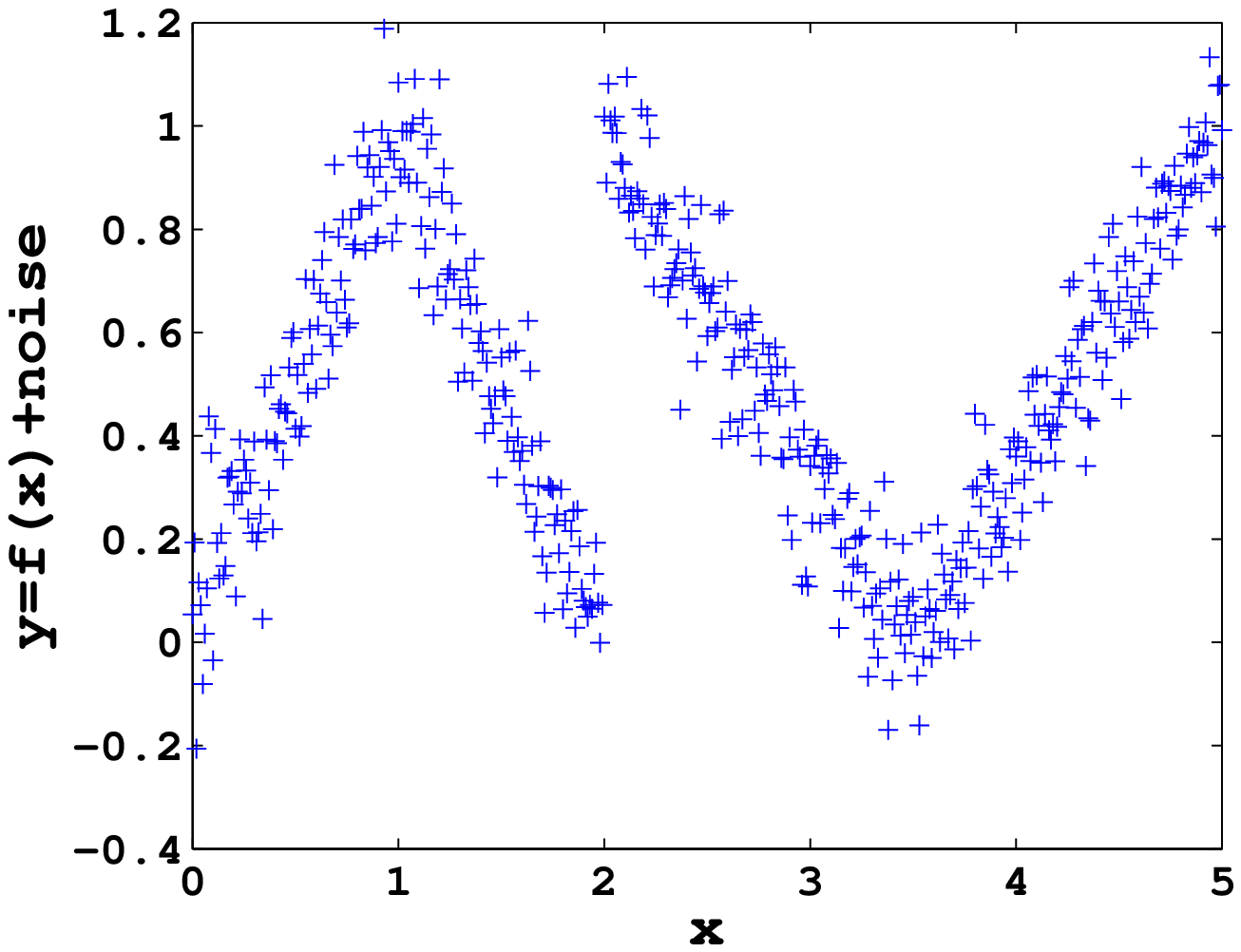} & \includegraphics[scale=.38]{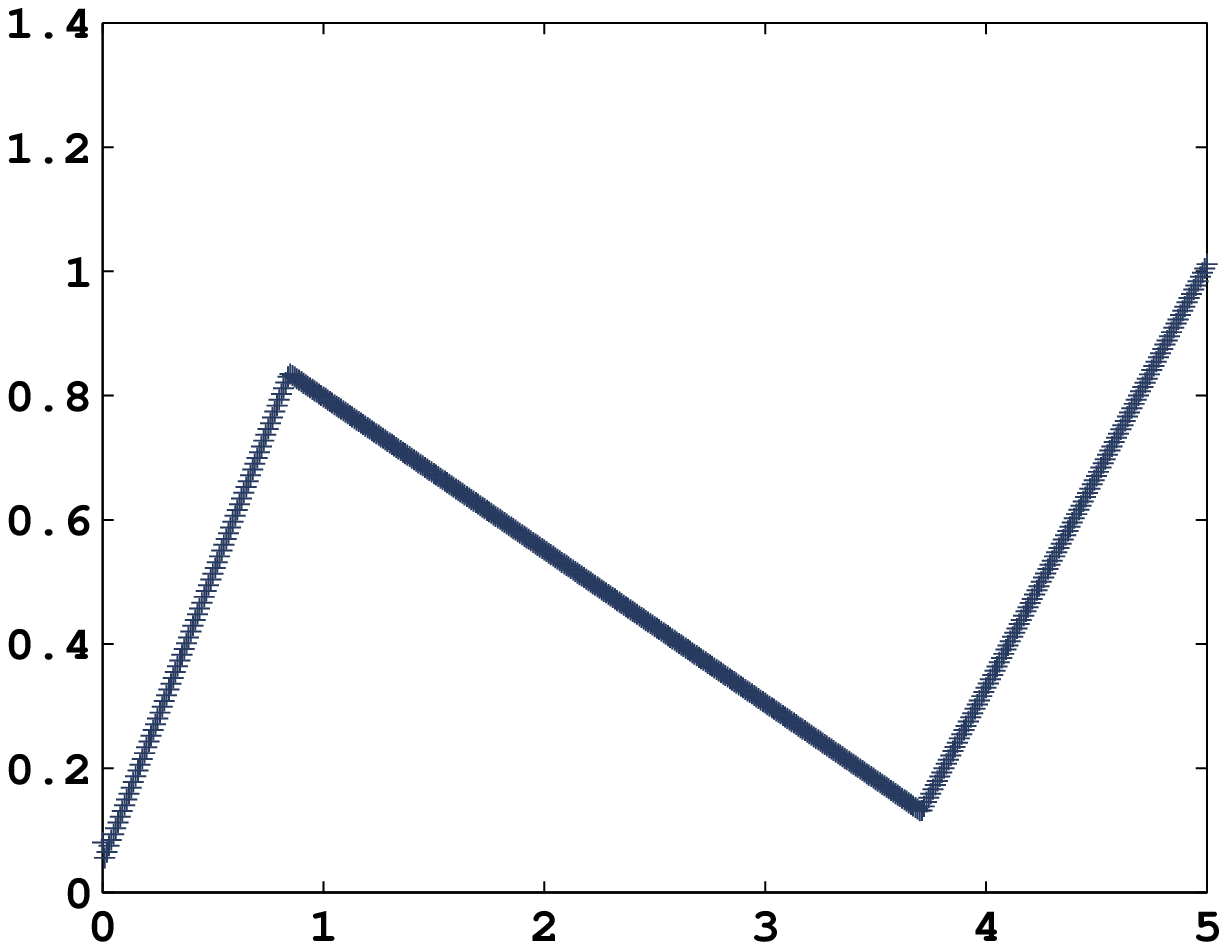}\\
(a) & (b) & (c)\\
\includegraphics[scale=.38]{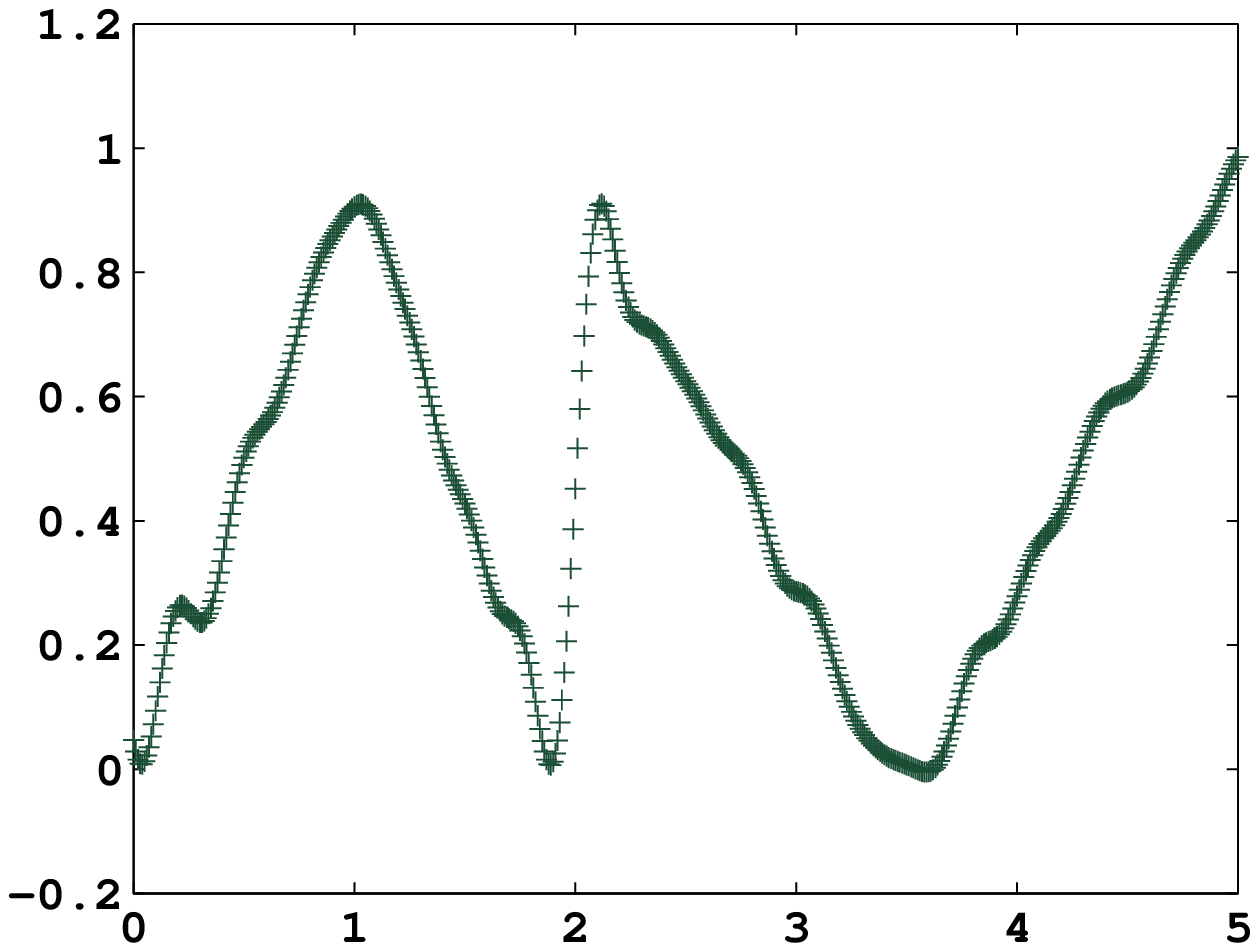} & \includegraphics[scale=0.36]{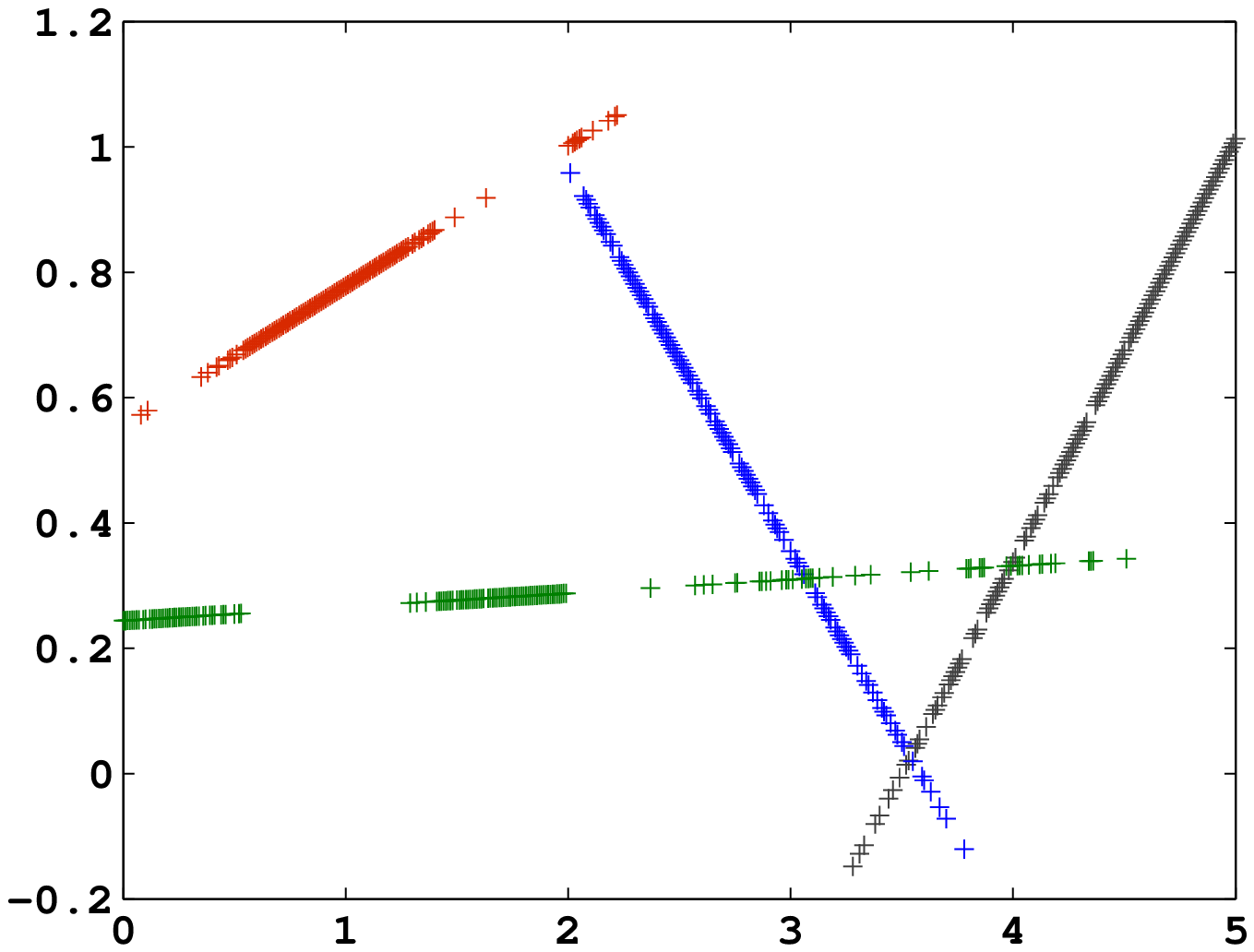} & \includegraphics[scale=.36]{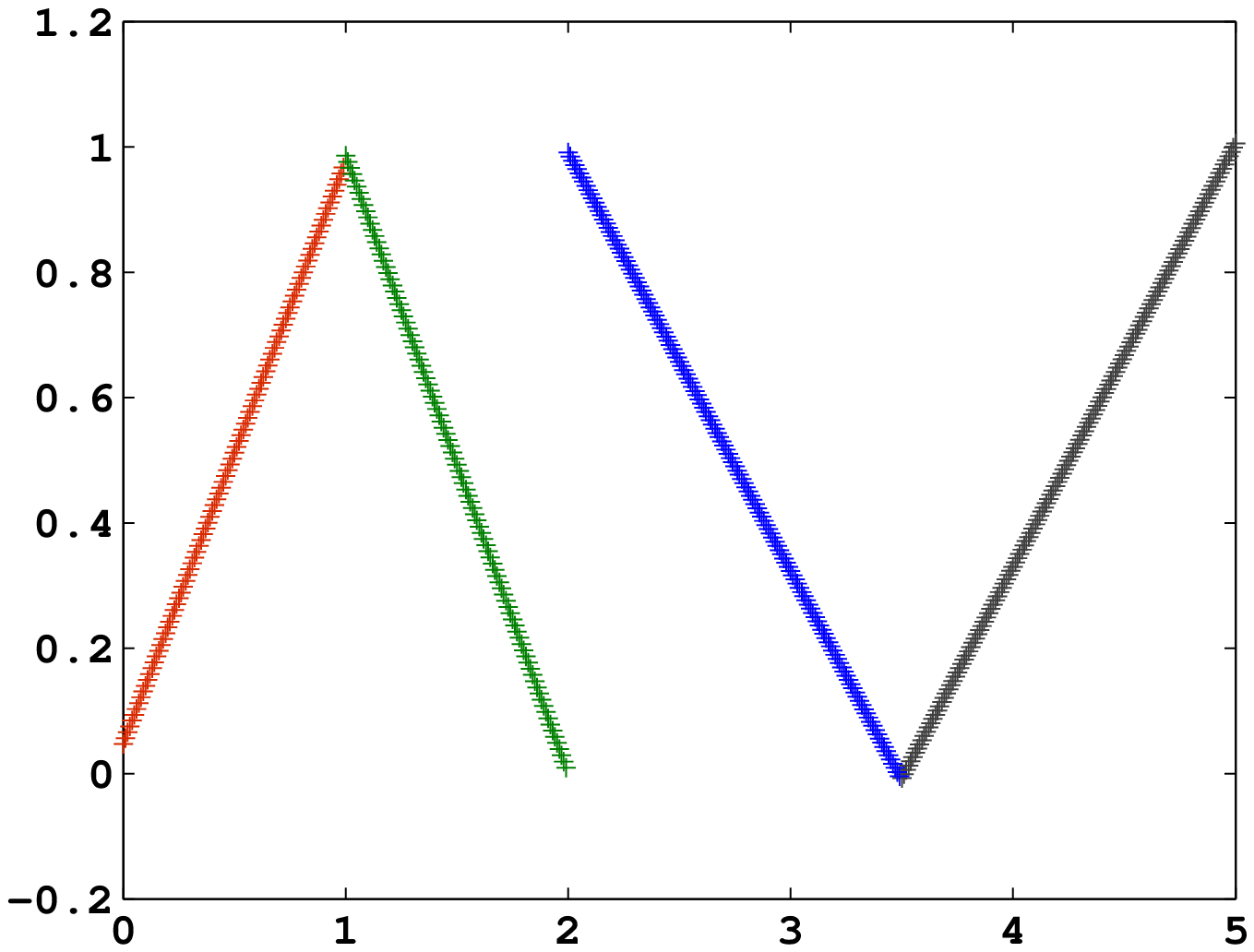}\\
(d) & (e) & (f)
  \end{tabular}
\caption{(a) 4-piecewise affine function $f$ described in problem 1; (b) function $f$ corrupted by additive
Gaussian noise; functions learnt using (c) hinge hyperplane algorithm, (d) support vector regression,
(e) $K$-plane regression approach and (f) modified $K$-plane regression approach, given noisy samples of $f$.}
\label{fig:1d-example1}
 \end{center}
\end{figure*}

\textbf{Problem 2: } In this problem the target function is a 3-piecewise affine function and we show the functions learnt by different approaches on
noise-free as well as noisy training set.
Figure \ref{fig:1d-example2} shows functions learnt using different approaches given the noise-free training examples.
As can be seen, all algorithms learn a very good approximation of the target function (when given noise-free training data).
The hinge hyperplanes method and SVR learn a continuous approximation while the K-plane and modified K-plane methods learn
the discontinuous function.

We see that both $K$-plane and modified $K$-plane regression approach learn
the function exactly. But MSE of $K$-plane method is much higher than modified $K$-plane method as can be seen in
Table \ref{Table:problem2}. The reason is as follows. In problem 2, the
three sets (defining the partitioning of the domain of the function)
corresponding to the three affine functions are [0,1), [1,2) and [2,3].
Moreover, $\forall x\in [0,1)\cup [2,3]$, $f(x)=x$. Thus the same affine function is assigned to two of the three disjoint
sets. The $K$-plane regression approach tries to partition the training data so that for each partition we can learn a good
function to fit the data; it does not care about whether the points in a partition are close together. Hence, any partition of the
 the set $[0,1) \cup [2,3]$ into two parts (including the case where one part is null) would result in roughly the same value
 of the error function for the $K$-plane method. But, for prediction on a new point, we have
to use the nearness of the new point to centroid of the
partitions. Hence, if the partitions are bad then the final MSE can be very large. In this problem, when $K$-plane regression is given noise free
samples, it always learnt only two hyperplanes irrespective of the value of $K$ used (with the sets ($S_k$) corresponding
to the remaining partitions being empty). The means of the two partitions learnt were $1.505$ and $1.4975$. This clearly shows the
algorithm has put  $[0,1) \cup [2,3]$ into one partition. This leads to very poor prediction on test samples and high MSE in case of
$K$-plane regression. In contrast, the means learnt using modified $K$-plane regression are $0.495$, $1.5$ and $2.505$.

In the second part of this problem, we have added noise to the true function values in the training
set as explained earlier. Figure \ref{fig:1d-example3}
shows the functions learnt using different approaches given these noisy samples of the function. The MSE achieved by the
learnt function on a test set under different algorithms are shown in  Table \ref{Table:problem2}.
We see that only modified $K$-plane regression approach learns the target function exactly.

The function learnt by $K$-plane regression is very poor
 and its MSE is also high. This shows that, unlike in the earlier case, the $K$-plane regression algorithm could not even get the
two affine functions correctly. Given the shape of function learnt on this problem by $K$-plane regression when the examples are
noise-free, we can see that this algorithm is very sensitive to additive noise.

Both hinge hyperplanes method and SVR learn a good continuous approximation to the target function. However, these are not as good
as the functions learnt by these algorithms on noise-free data of this problem.
In contrast, the modified $K$-plane regression algorithm learns the discontinuous function
exactly under our additive noise also. It also achieves the
minimum MSE which is nothing but the noise variance as can be seen in Table \ref{Table:problem2}.

\begin{table}
\begin{center}
  \begin{tabular}{|p{.4in}|p{.85in}|p{.9in}|p{.55in}|}
\hline
 & Method      &     Parameters        &      MSE          \\ \hline
Without   & $K$-plane   &     \# hyperplanes = 3 &    0.7917            \\
Noise & Modified $K$-plane &     \# hyperplanes = 3 &   3.33$\times 10^{-28}$       \\
& Hinge Hyperplane & \# hinges = 13 &   0.008\\
& SVR  &$C=1024$, $\sigma=16$, $\epsilon=2^{-8}$ &   0.0041 \\
\hline
With   & $K$-plane   &     \# hyperplanes = 3 &    0.1352            \\
Noise & Modified $K$-plane &     \# hyperplanes = 3 &   0.011       \\
& Hinge Hyperplane & \# hinges = 23 &   0.0237\\
& SVR  &$C=1024$, $\sigma=16$, $\epsilon=2^{-8}$ &   0.0148 \\
\hline
\end{tabular}
\caption{MSE of different regression approaches on problem 2.} \label{Table:problem2}
\end{center}
\end{table}

\begin{figure*}
 \begin{center}
  \begin{tabular}{ccc}
 \includegraphics[scale=.38]{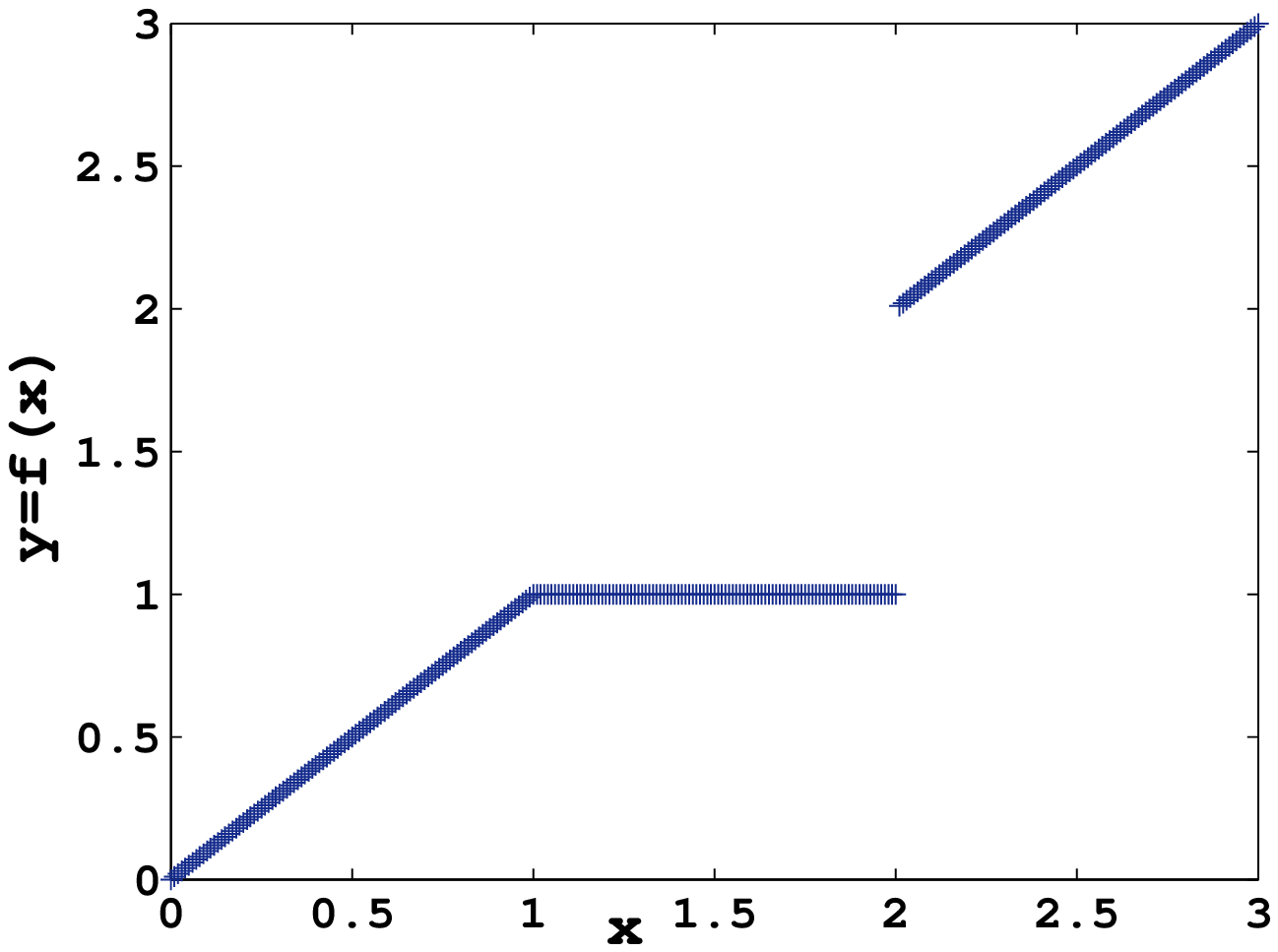}  &  \includegraphics[scale=.36]{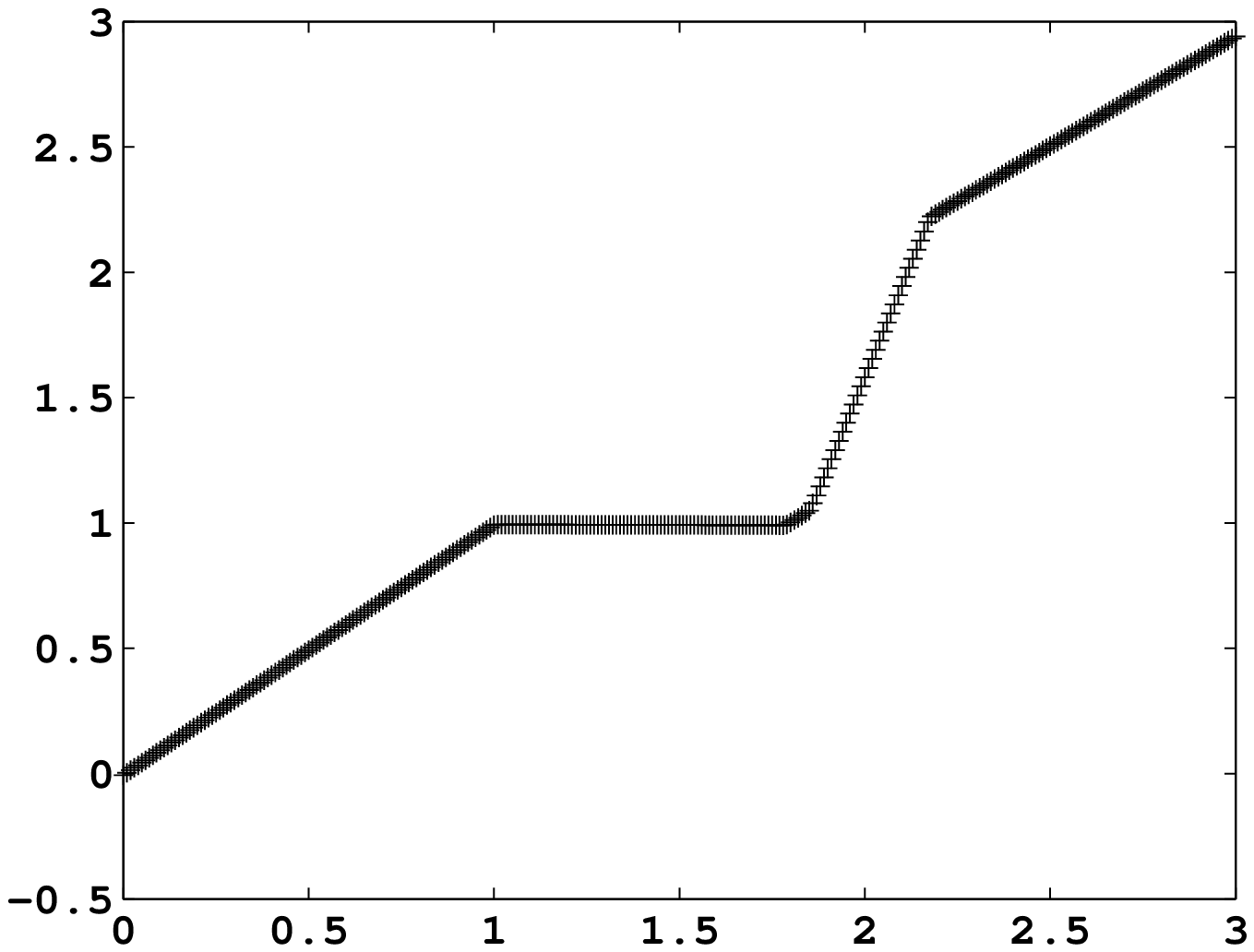}  & \includegraphics[scale=.38]{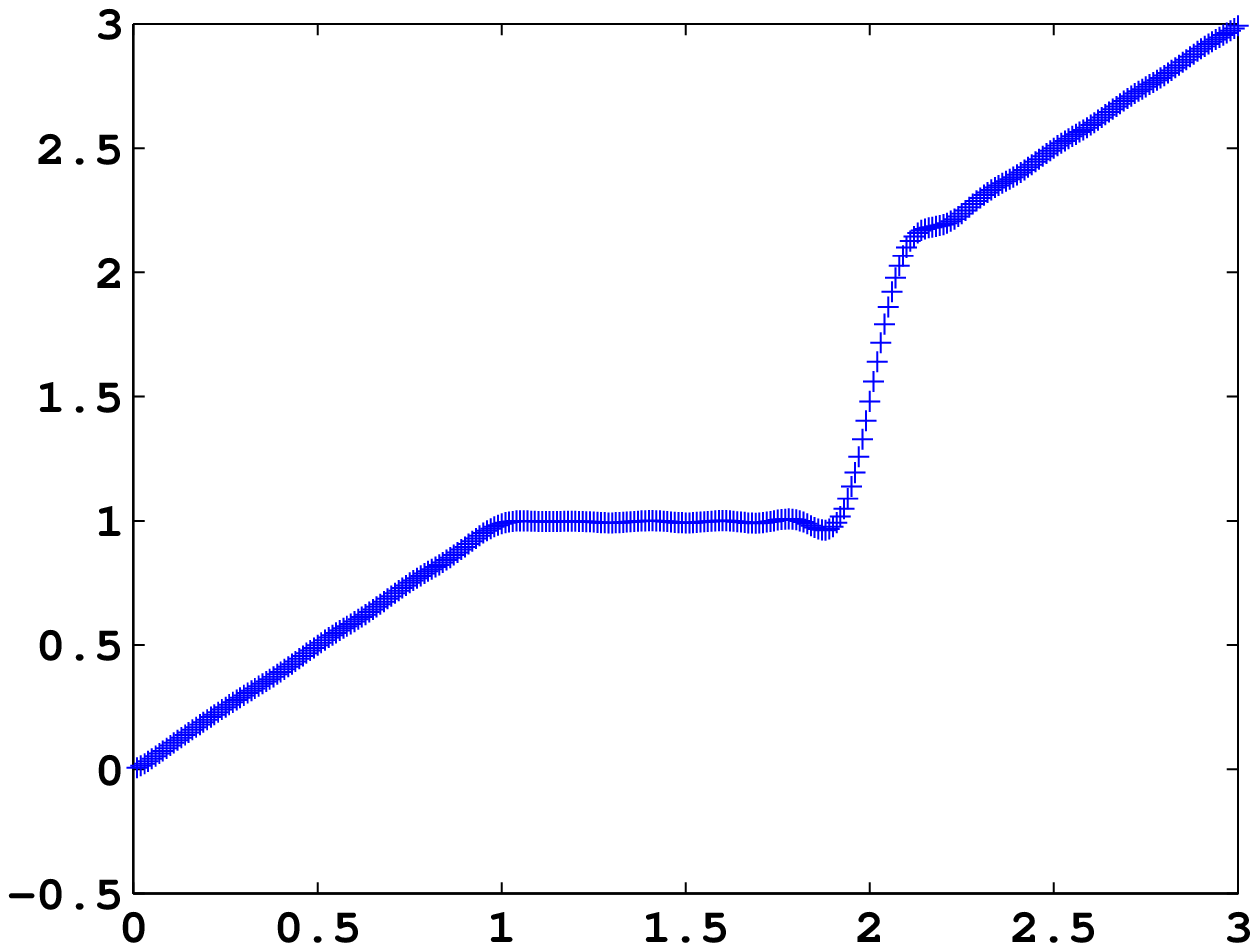} \\
(a)  & (b)  &(c) \\
\includegraphics[scale=.38]{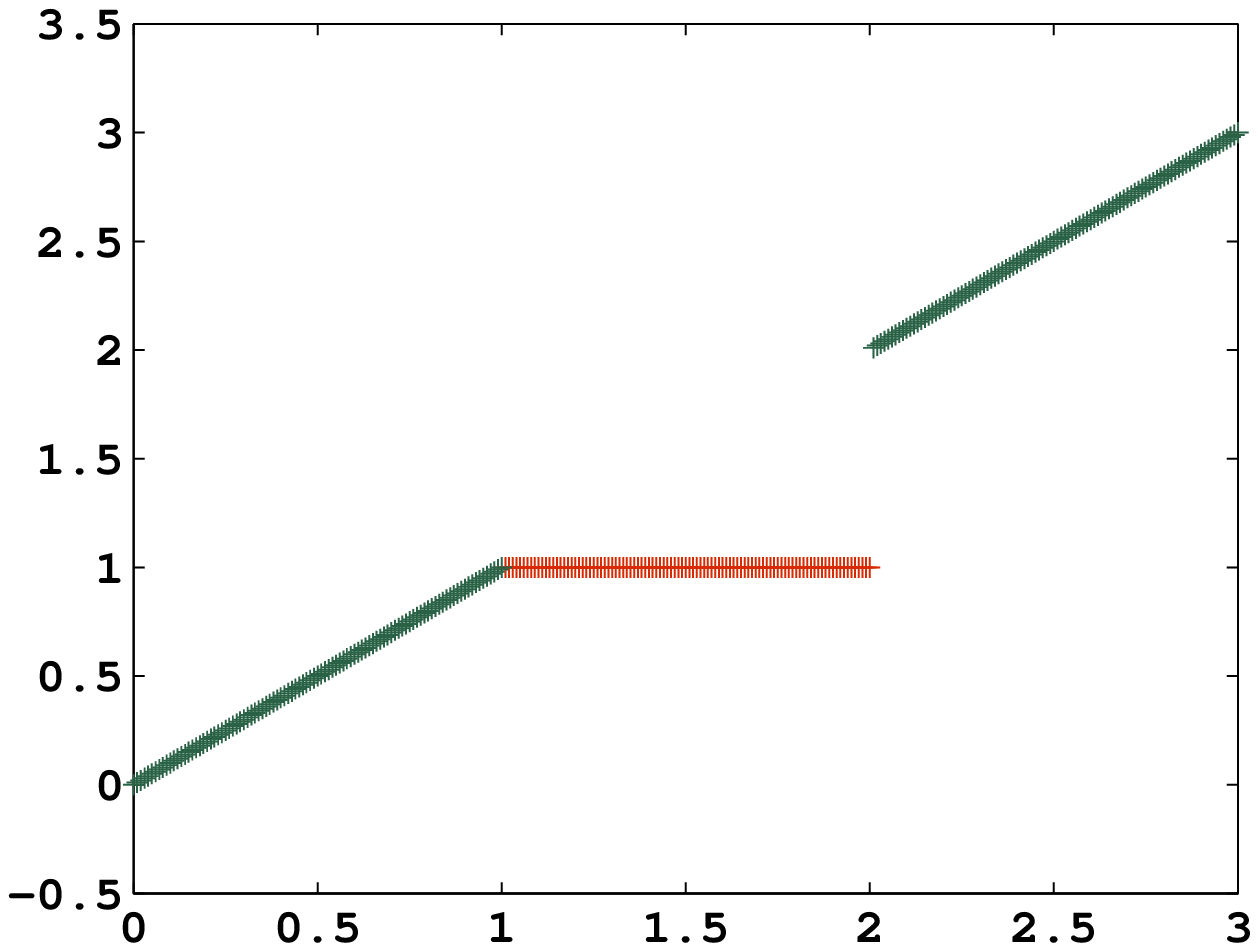} & \includegraphics[scale=0.38]{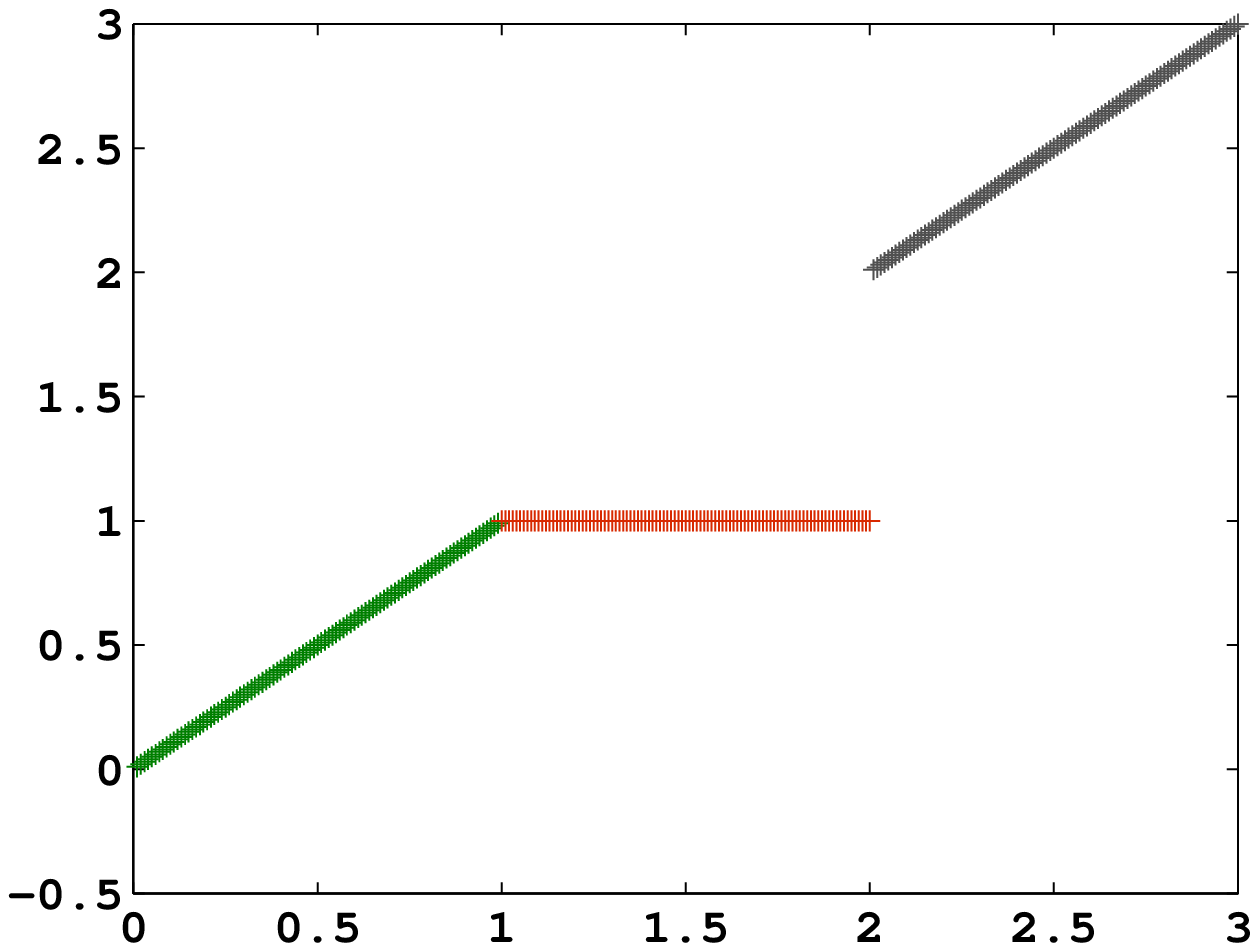} & \\
(d)  & (e) &
  \end{tabular}
\caption{(a) 3-piecewise affine function $f$ described in problem 2; functions learnt using (b) hinge hyperplane algorithm, (c) support vector regression,
(d) $K$-plane regression approach and (e) modified $K$-plane regression approach, given noise-free samples of $f$.}
\label{fig:1d-example2}
 \end{center}
\end{figure*}

\begin{figure*}
 \begin{center}
  \begin{tabular}{ccc}
 \includegraphics[scale=.4]{3plane-new.eps}  &  \includegraphics[scale=.38]{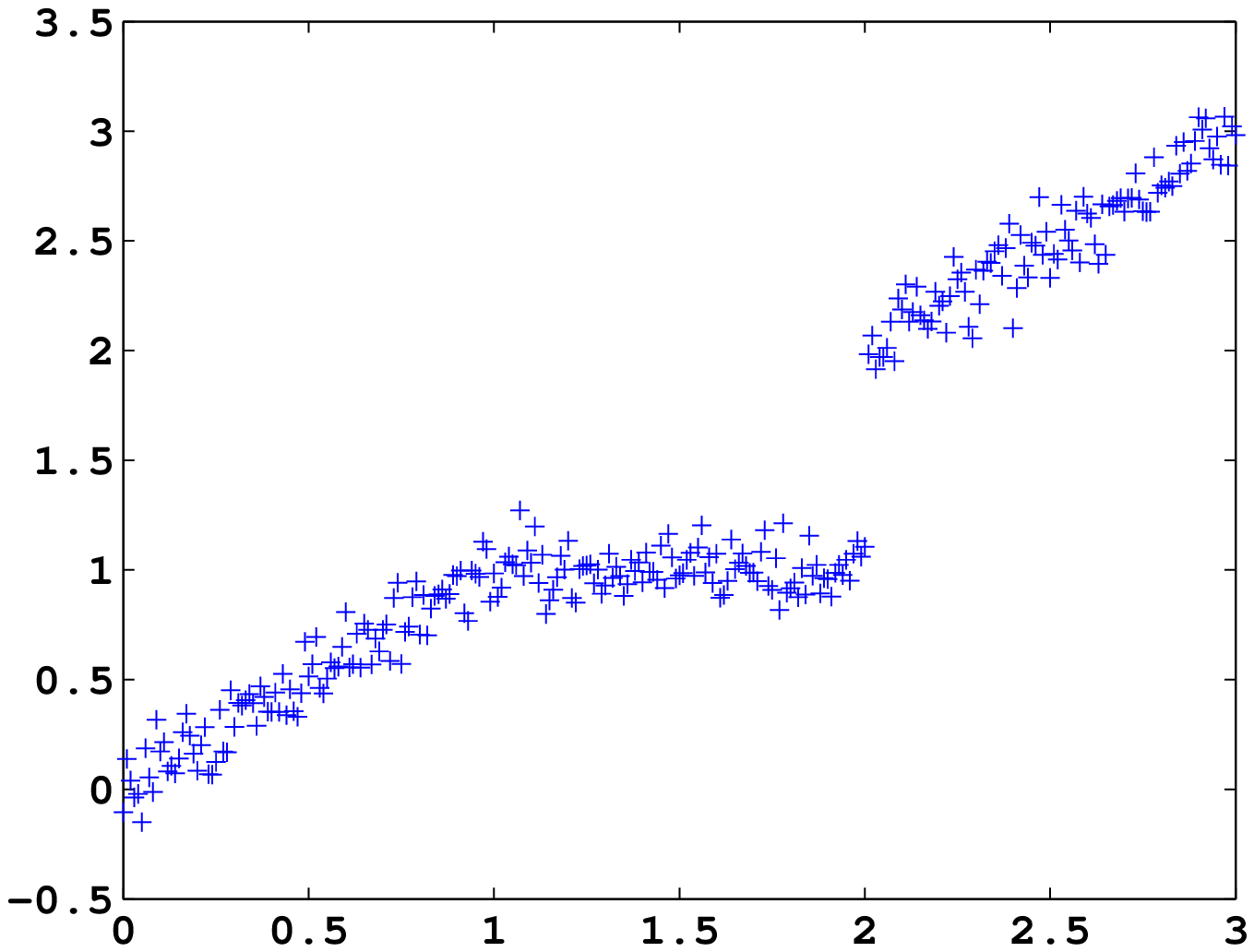}  & \includegraphics[scale=.38]{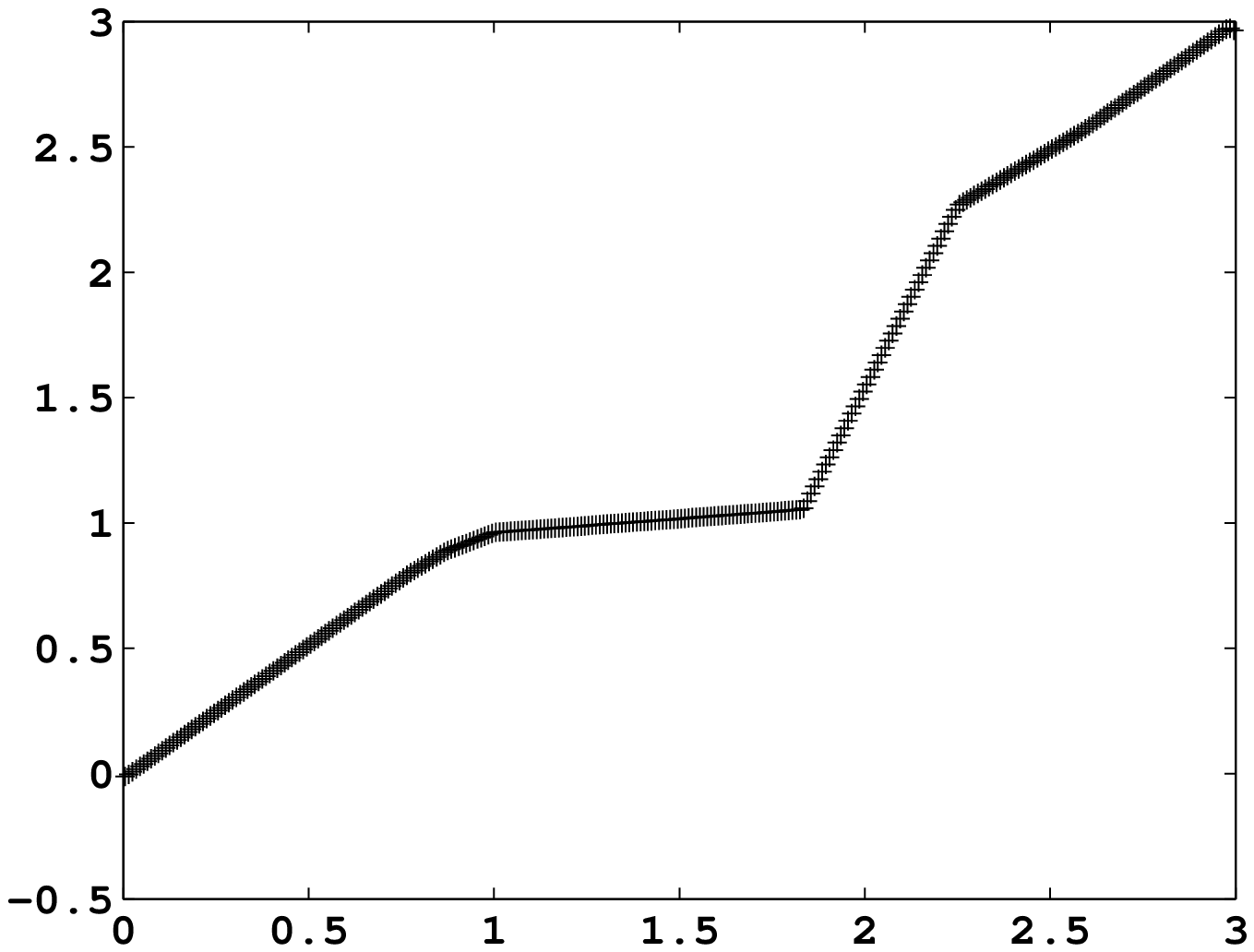}\\
(a)  &  (b)  &  (c) \\
 \includegraphics[scale=.38]{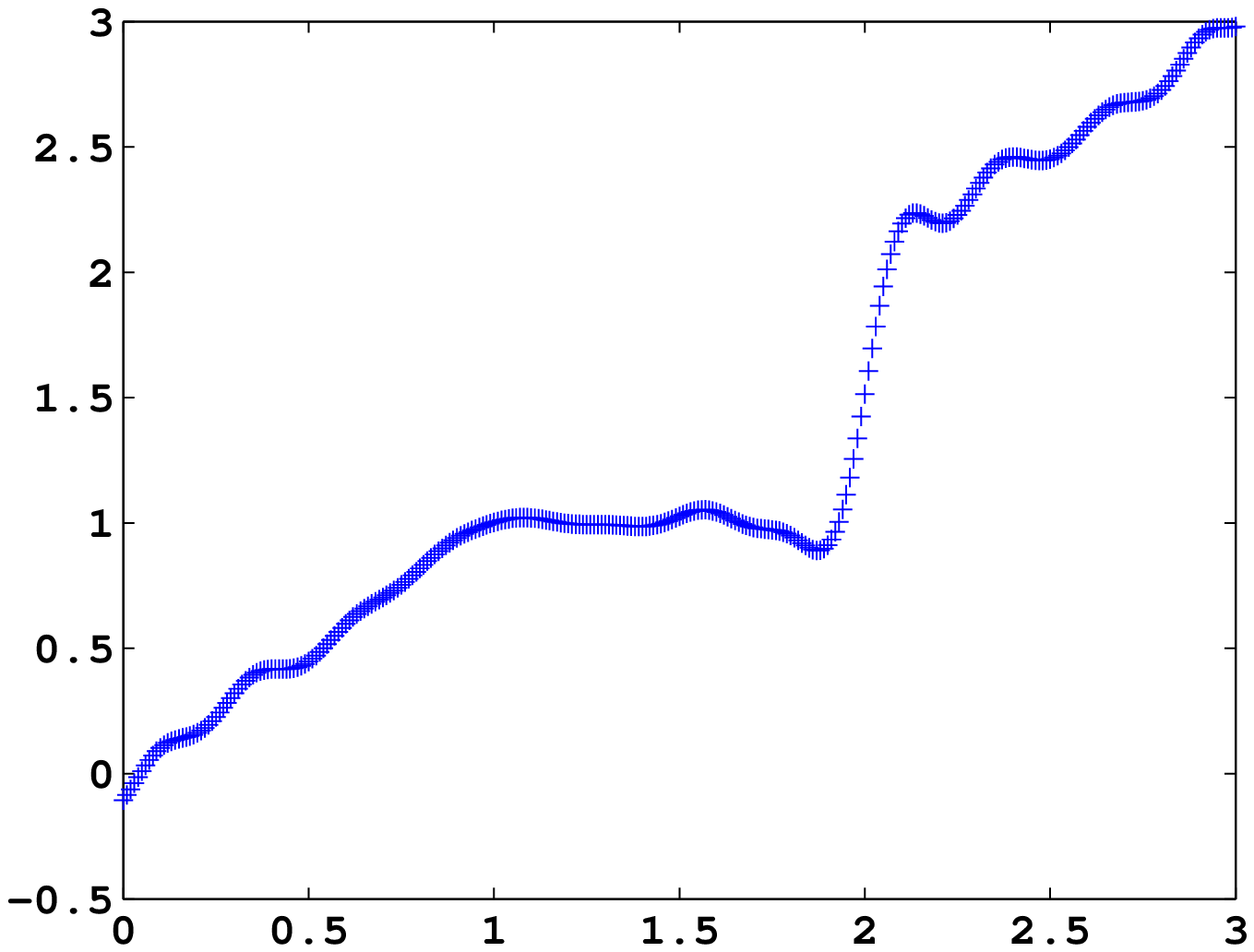}  & \includegraphics[scale=0.38]{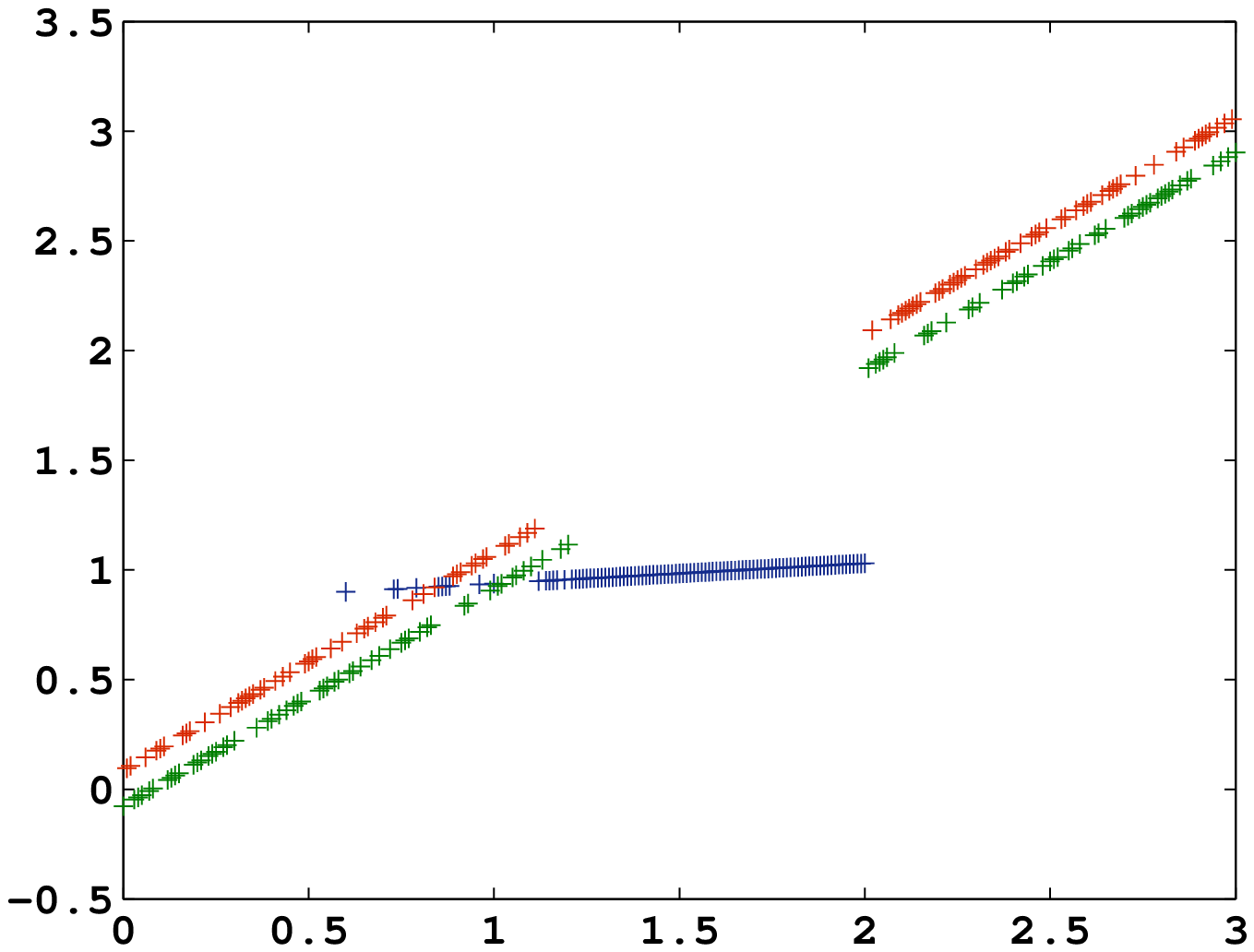} & \includegraphics[scale=0.38]{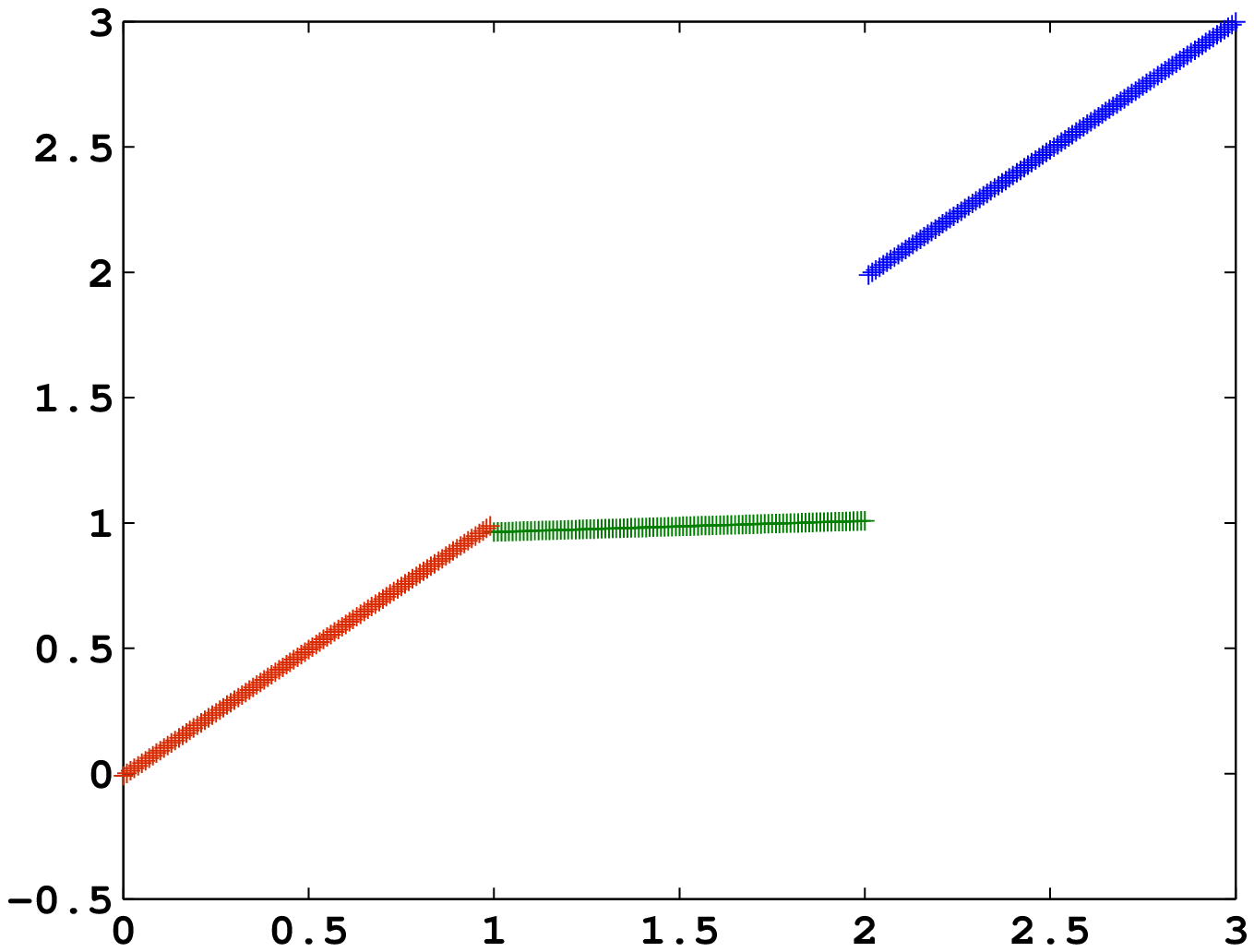}\\
9d)  &  (e) & (f)
  \end{tabular}
\caption{(a) 3-piecewise affine function $f$ described in problem 2; (b) function $f$ corrupted by additive Gaussian noise; functions learnt using (c) hinge hyperplane algorithm, (d) support vector regression,
(e) $K$-plane regression approach and (f) modified $K$-plane regression approach, given noisy samples of $f$.}
\label{fig:1d-example3}
 \end{center}
\end{figure*}

\subsection*{Results on Real Datasets: }
We now discuss performance of modified $K$-plane regression algorithm in comparison
with other approaches on different real datasets. The results provided are based on 10-
repetitions of 10-fold cross validation. We show average values and standard deviation of mean square error (MSE)
and the time taken. The results are presented in Table \ref{Table:comparison1}-\ref{Table:comparison5}.

\begin{table}
\begin{center}
  \begin{tabular}{|p{.85in}|p{.8in}|p{.6in}|p{0.55in}|}
\hline
Method      &     Parameters         &      MSE          &         Time(sec)           \\ \hline
$K$-plane   &     \# hyperplanes = 2 &   17.15$\pm$0.85  &  0.01                       \\
            &     \# hyperplanes = 3 &   27.47$\pm$2.74  &  0.03$\pm$0.003             \\
            &     \# hyperplanes = 4 &   30.29$\pm$1.77  &  0.04$\pm$0.005             \\
            &     \# hyperplanes = 5 &   39.67$\pm$9.11  &  0.06$\pm$0.012             \\
\hline
Modified $K$-plane  &     \# hyperplanes = 2 &  14.95$\pm$0.27  &  0.006               \\
$\gamma=100$        &     \# hyperplanes = 3 &  14.72$\pm$0.53  &  0.01$\pm$0.001      \\
                    &     \# hyperplanes = 4 &  14.25$\pm$0.62  &  0.014$\pm$0.001     \\
                    &     \# hyperplanes = 5 &  13.92$\pm$0.78  &  0.02$\pm$0.002      \\
\hline
Hinge Hyperplane & \# hinges = 1 & 19.29$\pm$2.19  &  0.01$\pm$0.003 \\
& \# hinges = 2 & 16.45$\pm$1.34 & 0.04$\pm$0.006  \\
& \# hinges = 3 & 16.25$\pm$1.16 & 0.07$\pm$0.006  \\
& \# hinges = 4 & 16.06$\pm$0.98 & 0.11$\pm$0.008  \\
& \# hinges = 5 & 15.62$\pm$1.57 & 0.14$\pm$0.015  \\
\hline
SVR  &$C=128$, $\sigma=0.25$, $\epsilon=2^{-8}$ &   10.08$\pm$0.42 &  0.17$\pm$0.01 \\
\hline
\end{tabular}
\caption{Comparison results of modified $K$-plane regression approach with other regression approaches on Housing Dataset.} \label{Table:comparison1}
\end{center}
\end{table}

\begin{table}
\begin{center}
  \begin{tabular}{|p{.85in}|p{.8in}|p{.6in}|p{0.55in}|}
\hline
Method             &     Parameters        &      MSE          &         Time(sec)    \\ \hline
$K$-plane          &      \# hyperplanes=2 &  10.44$\pm$0.17    &  0.11$\pm$0.02        \\
                   &      \# hyperplanes=3 &  9.19$\pm$0.62     &  0.14$\pm$0.01        \\
                   &      \# hyperplanes=4 &  9.87$\pm$1.45     &  0.27$\pm$0.03        \\
                   &      \# hyperplanes=5 &  10.85$\pm$1.19    &  0.39$\pm$0.05        \\
\hline
Modified $K$-plane &      \# hyperplanes=2&  4.80$\pm$0.02  &  0.02$\pm$0.002        \\
$\gamma$=100 &      \# hyperplanes=3&  4.68$\pm$0.03  &  0.04$\pm$0.005        \\
 &      \# hyperplanes=4&  4.69$\pm$0.03  &  0.08$\pm$0.01        \\
 &      \# hyperplanes=5&  4.68$\pm$0.03  &  0.08$\pm$0.01        \\
\hline
Hinge Hyperplane & \# hinges = 1 & 4.73$\pm$0.06  &  0.01$\pm$0.001         \\
& \# hinges = 2  &  4.53$\pm$0.03& 0.08$\pm$0.01             \\
& \# hinges = 3  &  4.47$\pm$0.04& 0.17$\pm$0.02             \\
& \# hinges = 4  &  4.41$\pm$0.02& 0.28$\pm$0.02             \\
& \# hinges = 5  &  4.44$\pm$0.06& 0.40$\pm$0.03             \\
\hline
SVR   & $C=32$, $\sigma=0.5$, $\epsilon=0.5$ &  4.50$\pm$0.01 & 1.68$\pm$0.01          \\
\hline
\end{tabular}
\caption{Comparison results of modified $K$-plane regression approach with other regression approaches on Abalone Dataset.} \label{Table:comparison2}
\end{center}
\end{table}

\begin{table}
\begin{center}
  \begin{tabular}{|p{.85in}|p{.8in}|p{.6in}|p{0.55in}|}
\hline
Method             &     Parameters        &      MSE          &         Time(sec)    \\ \hline
$K$-plane           &      \# hyperplanes=2 &  10.34$\pm$0.22  &  0.02$\pm$0.001        \\
           &      \# hyperplanes=3 &  11.15$\pm$0.56           &  0.02$\pm$0.003        \\
           &      \# hyperplanes=4 &  13.08$\pm$1.10  &  0.04$\pm$0.003        \\
           &      \# hyperplanes=5 &  13.72$\pm$0.77  &  0.05$\pm$0.003        \\
\hline
Modified $K$-plane  &      \# hyperplanes=2 &  8.55$\pm$0.11  &  0.006        \\
$\gamma$=100  &      \# hyperplanes=3 &  8.72$\pm$0.25  &  0.01$\pm$0.003        \\
  &      \# hyperplanes=4 &  8.82$\pm$0.75  &  0.01$\pm$0.001        \\
  &      \# hyperplanes=5 &  8.83$\pm$0.69  &  0.01$\pm$0.002        \\
\hline
Hinge Hyperplane & \# hinges = 1 & 9.81$\pm$0.52  &  0.003    \\
&  \# hinges = 2 & 9.03$\pm$0.53 & 0.02$\pm$0.002             \\
&  \# hinges = 3 & 8.75$\pm$0.37 & 0.03$\pm$0.01              \\
&  \# hinges = 4 & 8.58$\pm$0.35 & 0.05$\pm$0.009             \\
&  \# hinges = 5 & 8.35$\pm$0.39 & 0.08$\pm$0.005             \\
\hline
SVR   & $C$=16, $\sigma=1$, $\epsilon=0.25$ & 6.80$\pm$0.26 & 0.03         \\
\hline
\end{tabular}
\caption{Comparison results of modified $K$-plane regression approach with other regression approaches on Auto-mpg Dataset.} \label{Table:comparison4}
\end{center}
\end{table}

\begin{table}
\begin{center}
  \begin{tabular}{|p{.85in}|p{.8in}|p{.6in}|p{0.55in}|}
\hline
Method             &     Parameters        &      MSE          &         Time(sec)    \\ \hline
$K$-plane           &      \# hyperplanes=2 &  61.81$\pm$7.17  &  0.39$\pm$0.05        \\
           &      \# hyperplanes=3 &  19.48$\pm$0.49  &  0.48$\pm$0.07        \\
           &      \# hyperplanes=4 &  15.88$\pm$0.92  &  0.92$\pm$0.13        \\
           &      \# hyperplanes=5 &  19.98$\pm$1.03  &  1.19$\pm$0.20        \\
\hline
Modified $K$-plane  &      \# hyperplanes=2 & 154.45$\pm$12.61  &  0.15$\pm$0.02        \\
$\gamma=100$  &      \# hyperplanes=3 &  23.47$\pm$1.56  &  0.24$\pm$0.03       \\
  &      \# hyperplanes=4 &  11.88$\pm$0.15  &  0.48$\pm$0.04        \\
  &      \# hyperplanes=5 &  11.80$\pm$0.24  &  0.66$\pm$0.10        \\
  &      \# hyperplanes=6 &  10.98$\pm$0.14  &  0.70$\pm$0.16  \\
\hline
Hinge Hyperplane & \# hinges = 1 & 29.40$\pm$21.57  &  0.05$\pm$0.002  \\
& \# hinges = 2 & 11.39$\pm$0.32 & 0.17$\pm$0.017 \\
& \# hinges = 3 & 10.77$\pm$0.27 & 0.38$\pm$0.047 \\
& \# hinges = 4 & 10.66$\pm$0.33 & 0.64$\pm$0.062 \\
& \# hinges = 5 & 10.06$\pm$0.13 & 0.98$\pm$0.055 \\
\hline
SVR & $C=256$, $\sigma=1$, $\epsilon=0.5$  & 8.47$\pm$0.11 & 23.16$\pm$0.31 \\
\hline
\end{tabular}
\caption{Comparison results of modified $K$-plane regression approach with other regression approaches on Computer Activity Dataset.} \label{Table:comparison5}
\end{center}
\end{table}

We see that for all datasets, the MSE achieved by the simple $K$-plane regression method is highest among all algorithms. Thus, though the
$K$-plane regression method is conceptually simple and appealing, its performance is not very good.

The modified $K$-plane regression algorithm performs much better than $K$-plane regression not only in terms of MSE but also in terms of time
taken. The reason why modified $K$-planes method takes lesser time is that it converges in fewer iterations.
This happens because modified $K$-plane regression algorithm gives importance to the connectedness of the clusters also.
 As a result, number of transitions of points from one cluster to another after every iteration
are lesser and thus the clusters stabilize after fewer iterations.

%We see that for all datasets, MSE of modified $K$-plane regression algorithm is lesser than MSE of $K$-plane regression algorithm.
%We also see that for all datasets, modified $K$-plane regression algorithm convergences faster than $K$-plane regression algorithm.
%This happens because modified $K$-plane regression algorithm gives importance to the connectedness of the clusters along
%with minimizing the mean square error. As a result, number of transitions of points from one cluster to another after every iteration
%are lesser using modified $K$-plane regression algorithm.
%This results in faster convergence.

The performance of modified $K$-plane regression algorithm is comparable to that of hinge hyperplane algorithm in terms of MSE.
It performs better than hinge hyperplane method on Housing dataset. On Auto-Mpg dataset, Abalone dataset and Computer Activity dataset,
the minimum MSE of modified $K$-plane regression approach is only a little higher than the minimum MSE of hinge hyperplane method.
Modified $K$-plane regression algorithm is also faster
than hinge hyperplane method on all data sets.

On all problems except on the Abalone dataset, the SVR algorithms achieves better MSE than modified $K$-plane regression
algorithm. However, we observe that modified $K$-plane regression is significantly faster than SVR.
In SVR, the
complexity of dual optimization problem is $O(N^3)$, where $N$ is the number of points.
In contrast, in modified $K$-plane regression, at each iteration, the major computation is finding
$K$ linear regression functions. The time complexity of each iteration in modified $K$-plane regression
is $O(K(d+1)^3)$ which is very less than $O(N^3)$ if $N>>d$.

%Compared to SVR, modified $K$-plane regression approach performs equally good on the Abalone dataset in terms of MSE.

%We observe that time-wise, modified $K$-plane regression algorithm is significantly faster than SVR. In SVR, the
%complexity of dual optimization problem is $O(N^3)$, where $N$ is the number of points.
%In contrast, in modified $K$-plane regression, at each iteration, the major computation is finding
%$K$ linear regression functions. The time complexity of each iteration in modified $K$-plane regression
%is $O(K(d+1)^3)$ which is very less than $O(N^3)$ if $N>>d$.
%modified $K$-plane regression approach is also faster than hinge hyperplane approach.

Thus, we see that overall, modified $K$-plane regression is a very attractive method for learning nonlinear regression functions
by approximating them as piecewise linear functions. It is conceptually simple and the algorithm is very efficient. Its performance is
comparable to that of SVR or hinge hyperplanes method in terms of accuracy. It is significantly faster than SVR and is also faster than hinge
hyperplane method. Further, unlike all other current regression function learning algorithms, this method is capable of learning
discontinuous functions also.

\section{Conclusions}
\label{sec:conclusion}

In this paper, we considered the problem of learning piecewise linear regression models.
We proposed an interesting and simple algorithm to learn such functions. The proposed method
is capable of learning discontinuous functions also. Through simulation experiments we showed that the
performance of the proposed method is good and is comparable to state-of-art in regression function learning.

The basic idea behind the proposed method is very simple.
Let $S=\{(\xx_1 , y_1),\ldots,(\xx_N,y_N)\}$ be the training dataset.
% where$(\xx_n,y_n) \in \Re^d\times\Re$.
We essentially want to find a way to partition the set $\{ \xx_1, \cdots, \xx_N \}$ into $K$
sets such that we can find a good linear fit for the targets (i.e., $y_i$) of points in each partition.
 The algorithm achieves this by repeatedly partitioning the points and fitting
linear models. After each model fit, we repartition the points based on the closeness of targets to the current models.
We called this the $K$-plane regression algorithm.
This algorithm is conceptually simple and is similar in spirit to the $K$-means clustering method.
While such an idea has been discussed in different contexts, we have not come across this algorithm
proposed and empirically investigated for nonlinear regression.

Though this idea is interesting, as we showed here, it has several drawbacks. As the results in previous section show,
this algorithm performs poorly even on one dimensional problems.

In this paper we have also proposed a modification of the above method which performs well as a regression learning method.
In our modified $K$-plane regression algorithm, during the process of repeatedly partitioning feature vectors and fitting linear models,
we make the partitions so that we get good linear models and, also, the points of a partition are all close together. This
idea is easily incorporated into the algorithm by expanding the parameter vector to be learnt and by modifying the objective
function to be minimized. The resulting algorithm essentially does one step of linear regression and one step of $K$-means
clustering in each iteration.

Through empirical studies, we showed that the modified $K$-plane regression algorithm is very effective. Its performance
on some real data sets is comparable to that of nonlinear SVR in terms of accuracy while the proposed method is much faster than SVR.
The proposed method is better than the hinge hyperplane algorithm which is arguably the best method today for learning piecewise linear
functions. Through two synthetic one-dimensional problems, we also showed that the proposed method has better robustness to additive noise
than the other methods and that it is capable of learning discontinuous functions also.

We feel that the proposed method opens up interesting possibilities of designing algorithms for learning piecewise
linear functions. As mentioned earlier, simultaneous estimation of optimal partitions and optimal models for each partition is
computationally intractable. Hence an interesting and difficult open question is to establish theoretical bounds on the performance of
the modified $K$-plane regression method. While we showed that the method can be viewed as a limiting case EM algorithm under
reasonable probability model, a lot of work needs to be done to understand how close to optimum  can such methods converge to.

\bibliography{proposal}
\bibliographystyle{ieeetr}
\end{document}